\setlist[itemize]{noitemsep, topsep=0pt}
\setlist[enumerate]{noitemsep, topsep=0pt}
\renewcommand{\P}{\operatorname{P}}
\newcommand{\vertiii}[1]{{\left\vert\kern-0.25ex\left\vert\kern-0.25ex\left\vert #1 \right\vert\kern-0.25ex\right\vert\kern-0.25ex\right\vert}}
\newcommand{\Ar}{\mathcal{A}}
\newcommand{\Br}{\mathcal{B}}
\newcommand{\Cr}{\mathcal{C}}
\newcommand{\Rr}{\mathcal{R}}
\newcommand{\Tr}{\mathcal{T}}
\theoremstyle{plain}
\newtheorem{thm}{Theorem}
\newtheorem{lem}{Lemma}
\theoremstyle{remark}
\theoremstyle{definition}
\newtheorem{definition}{Definition}
\newcommand{\FDR}{\text{FDR}}
\newcommand{\N}{\mathrm{N}}
\newcommand{\Unif}{\mathrm{Unif}}
\newcommand{\bI}{\mathbb I}
\newcommand{\mX}{\mathcal X}
\newcommand{\mT}{\mathcal T}
\newcommand{\mA}{\mathcal A}
\newcommand{\mB}{\mathcal B}
\newcommand{\mC}{\mathcal C}
\newcommand{\mH}{\mathcal H}
\newcommand{\mQ}{\mathcal Q}
\newcommand{\mR}{\mathcal R}
\newcommand{\mU}{\mathcal U}
\newcommand{\csil}{{|S|}}
\newcommand{\sumnull}{{\sum_{S\in\mB_0^{(\ell)}}}}
\newcommand{\J}{\lceil(\gamma_m-\bar\Phi^{-1}(\alpha))/ \nu_m \rceil}
\newcommand{\node}{\mathrm{node}}
\newcommand{\el}{{(\ell)}}
\newcommand\Item[1][]{%
  \ifx\relax#1\relax  \item \else \item[#1] \fi
  \abovedisplayskip=0pt\abovedisplayshortskip=0pt~\vspace*{-\baselineskip}}
\newcommand{\mathleft}{\@fleqntrue\@mathmargin0pt}
\newcommand{\mathcenter}{\@fleqnfalse}
\title{DART2: a robust multiple testing method to smartly leverage helpful or misleading ancillary information}
\author{Jichun Xie and Xuechan Li}
\begin{document}

\maketitle

\begin{abstract}

    In many applications of multiple testing, ancillary information is available, reflecting the hypothesis null or alternative status. Several methods have been developed to leverage this ancillary information to enhance testing power, typically requiring the ancillary information is helpful enough to ensure favorable performance. 
    In this paper, we develop a robust and effective distance-assisted multiple testing procedure named DART2, designed to be powerful and robust regardless of the quality of ancillary information. When the ancillary information is helpful, DART2 can asymptotically control FDR while improving power; otherwise, DART2 can still control FDR and maintain power at least as high as ignoring the ancillary information. 
    We demonstrated DART2's superior performance compared to existing methods through numerical studies under various settings. In addition, DART2 has been applied to a gene association study where we have shown its superior accuracy and robustness under two different types of ancillary information.
    
\end{abstract}    

\textbf{Keywords:} Multiple testing, Ancillary information, Robust inference, False discovery control.

\newpage

\section{Introduction}

Multiple testing is a useful statistical framework that has been widely applied to genomics, clinical trials, neuroimaging, and environmental studies. In many studies in these fields, ancillary variables are available to provide contextual information about the hypothesis. One commonly seen assumption is that these ancillary information, such as genetic annotations, patient demographic variables in clinical trials, and neuron spatial locations in neuroimaging studies imply the null or alternative statuses of the hypotheses regarding genetic variants, patients, or brain neurons. 

Under the assumption, many methods have been developed to leverage the ancillary information to improve testing accuracy. A popular approach is covariate-adaptive testing, which convert ancillary information into covariates linked to each hypothesis. Those methods gain power by incorporating covariates through parametrically modeling the distribution of testing statistics or the proportion of null hypotheses \citep{leung2022zap,lei2018adapt,yun2022detection,cai2022laws,qiu2021neurt}. Another category includes methods that employ non-parametric models to categorize hypotheses into groups based on their ancillary information, subsequently enhancing power through the aggregation of p-values \citep{zhang2011multiple,li2023dart} or the application of weighted p-values within these groups \citep{hu2010false,barber2017p,yang20242dgbh}. Although these methods can improve testing accuracy when the ancillary information correctly implies the hypothesis status following their proposed models, they may suffer from inflated false discovery rates (FDR) or reduced power when this assumption does not hold. Especially, many methods require specifying correct parametric models or priors, which can be challenging in practice, and thus greatly compromise the performance of the methods in real data analysis.

To address this challenge, we introduce DART2, which does not rely on accurately specifying the ancillary information; it improves the FDR control and power of multiple testing at different levels of ancillary information quality. DART2 has two stages: screening and refining. The screening stage is similar to DART \citep{li2023dart}; it hierarchically aggregates the hypotheses based on their ancillary information and test them at different resolution levels; the screened out hypotheses will be recorded. If we stop at this stage, when the ancillary information does not correctly reflect the hypothesis status, the testing results may have inflated FDR. Thus, DART2 adds the refining stage to examine all the screened-out hypotheses again and selectively reject them based on their testing statistics. This selection strategy ensures that the FDR control is guaranteed even when the ancillary information is not accurate.

Compared to the existing methods, DART2 has two major benefits:
\begin{itemize}
\item \textit{Robust and powerful regardless of the quality of ancillary information:} DART2's unique refining stage guarantees the FDR control even when the ancillary information does not correctly reflect the hypothesis status. DART2 ensures asymptotic type I error control even if the ancillary information is wrong, and its power at least matches Benjamini-Hochberg (BH) procedure\citep{benjamini1995controlling}, thereby providing a robust and effective tool for hypothesis testing. On the other hand, when ancillary information partially reflect the hypothesis status, DART2 can improve power, matching the performance of other ancillary information-based methods.
\item \textit{Fewer constraints on the format of ancillary information:}
Most existing methods demand that ancillary information be explicitly linked to each hypothesis individually. In contrast, our approach relies on ancillary information structured as an aggregation tree, requiring only the mutual information between hypotheses. In such an aggregation tree, hypotheses with greater similarities are clustered together, sharing more common ancestor nodes, eliminating the need for information to be attached to each hypothesis directly. More details about the aggregation tree can be found in section~\ref{sec:method}.
\end{itemize}

The rest of the paper is organized as follows. Section~\ref{sec:method} introduces the DART2 method and its theoretical properties. Section~\ref{sec:simu} presents simulation studies that compare DART2 with other competing methods. Section~\ref{sec:real-data} demonstrates DART2's superior performance in a gene association study compared to the competing methods. Section~\ref{sec:discussion} discusses the results and concludes the paper.


\section{Methods}\label{sec:method}

\subsection{Hypotheses and aggregation tree}\label{sec:tree}

Suppose $m$ null hypotheses form the null set  $\Omega_0$ and $m_1$ alternative hypothesis form the alternative set $\Omega_1$, with $\Omega_0\cap \Omega_1=\emptyset$, and $|\Omega_0\cup \Omega_1|= m_0+m_1= m$. For any set $\Ar$, the notation $|\Ar|$ represents its cardinality.

Denote the test statistic for hypothesis $i$ by $T_i$. We assume $T_i$ follows $N(\theta_i, 1)$  with $\theta_i\leq 0$ if $i\in \Omega_0$. For example, Wald tests usually generate statistics asymptotically following these distributions under the null. More generally, if P-values $P_i$ are summarized for tests, we can transform them to test statistics $T_i$ by $T_i=\Phi^{-1}(1-P_i)$, where $\Phi^{-1}(\cdot)$ is the inverse standard Gaussian distribution function. The alternative distribution of $T_i$ could be flexible, although under traditional 
hypothesis testing settings, we often assume 
\[\P(T_i < x \mid i\in \Omega_0) \geq \P(T_i <x \mid i\in\Omega_1).\] 
In fact, the distribution requirement of $T_i$ can be further relaxed. For example, the null distribution of the null P-value statistics $P_i$ might be asymptotically super-uniform. See \citet{li2023dart} for more details. We focus on less general cases in this paper for keeping the asymptotic results simple.

To implement DART2, we need an aggregation tree $\mT$ to summarize the ancillary information.  \citet{li2023dart} introduces an algorithm to construct $\mT$ based on the distances between hypotheses: a shorter distance between two hypotheses suggest that they are more likely to be co-null or co-alternative. After applying the tree construction algorithm, let the resulting aggregation tree be $\mT=\{\mA^{(1)},\ldots,\mA^{(L)}\}$.
$\mA^{\el}$ is the node set of level $\ell$ of the tree; each node is a set of hypotheses that will be tested on this layer.
$\mA^{(1)}$ is the leaf node set, containing individual hypotheses $\mA^{(1)}=\{\{1\},\ldots,\{m\}\}$. On a higher layer $\mA^\el$ $(\ell\geq 2)$, close nodes from $\mA^{(\ell-1)}$ are merged to form nodes in $\mA^\el$. For any $
S\in \mA^\el$, its children set is denoted by $\mC(S)$, a subset of $\mA^{(\ell-1)}$.

Although any finite layer and finite children aggregation tree can be used to implement DART2, proper tuning parameter setting might help achieve better results for DART2. These parameters can be implemented with the tree construction algorithm in \citet{li2023dart}.
\begin{itemize}
  \item \textit{Maximum children size $M$}: 
  We recommend setting $M=2$. Larger values of $M$ might lead to some nodes containing many hypotheses, which need to be re-examined in the refining stage. Too many large nodes will decrease the statistical and computational efficiency of the refining stage.
  \item \textit{Maximum layer number $L$:}
  Our approach mirrors that of DART in establishing the maximum number of layers, defined by $
  L=\lfloor\log_Mm-\log_Mc_m\rfloor$, with $c_m$ indicating the preferred number of nodes at layer $L$. Previously, DART \citep{li2023dart} suggested using $c_m\geq 35$ to ensure asymptotic FDR control. Because DART2 is a more robust algorithm, we can lower $c_m$ to 5. In all numerical and empirical experiments in this paper, we set $c_m=5$ to get the maximum number of layers and thus maximum power. Besides, The robustness of DART2 across various $L$ values is examined in section~\ref{sec:simu}.
  \end{itemize}


Other than being constructed based on the distance matrix, the aggregation tree can also be obtained from prior knowledge, such as Phylogenetic trees and genealogy trees. A phylogenetic tree illustrates the evolutionary connections between biological entities (e.g. species or taxa); and a genealogy tree shows the inheritance relationships between family members. In both examples, closer nodes on the tree indicate the entities share a more recent common ancestor, suggesting their possible similarities in their physical or genetic characteristics. If each biological entity or family member on the tree leaves forms a hypothesis regarding its function or phenotype, then these trees can be used to implement DART2.

\subsection{Algorithm}

DART2 has two stages: screening and refining. In the screening stage, DART2 go through the aggregation tree to screen out hypothesis sets (nodes) that might contain alternative hypotheses. In the refining stage, DART2 examines all screened-out hypotheses to select a subset that is more likely to be alternative.

DART2's algorithm is outlined in Algorithm~\ref{alg:screen}, with details explained afterwards. A toy example with $7$ hypotheses arranged under a predefined $3$-layer aggregation tree is illustrated in Figure~\ref{fig:method}. 
\par

\begin{algorithm}[ht]
    \KwData{Test statistics $T_1,\ldots,T_m$; tree $\Tr=\{\Ar^\el: \ell\in [L]\}$.}
    \KwResult{Rejected hypotheses $\Rr$, $\ell\in[L]$.}
    \vspace{0.5cm}
    \tcp{Screening stage}
    On layer 1 (leaf layer), set the \textbf{node screening threshold} $\hat{c}^{(1)}$ as \eqref{eq:t1}\;
    Initialize the rejection set $\Rr=\{i:T_i>\hat{c}^{(1)}\}$\;
    $\Rr_\node=\{\{i\}: T_i>\hat{c}^{(1)}\}$\tcp*{Initialize the screened-out node set}
    \For{$\ell \in \{2,\ldots,L\}$}{
    $\tilde\mA^\el=\{A\setminus R: A\in\mA^\el,R=\{i:i\in S, S\in \mR_\node\}\}$\tcp*{Remove the rejected hypothesis from all nodes}
        Define the qualified node set $\Br^\el=\{S: S\in \tilde A^\el, |\Cr(S)|
        \geq 2\}$\;
        Derive the node testing statistics $T_S$ as in \eqref{eq:ts} for each $S\in \Br^\el$\;         
        Set the \textbf{node screening threshold} $\hat{c}^\el$ as in \eqref{eq:tl}\; Update the screened-out node set $\Rr_\node=\Rr_\node\cup\{S: T_{S} > \hat{c}^\el\}$.
    }
    \vspace{0.5cm}
    \tcp{Refining stage}
    For all $S\in \Rr_\node$, set the \textbf{refining threshold} $\hat{t}_S$ as in s\eqref{eq:rrt}\;
    Update the rejection set as $\Rr=\Rr\cup\{i: i \in \cup_{S\in \Rr_\node} S, \ T_i \geq \hat{t}_S\}$
    \caption{DART2 procedure}
    \label{alg:screen}
\end{algorithm}
\par
\begin{figure}[!htpb]
\includegraphics[width=\textwidth]{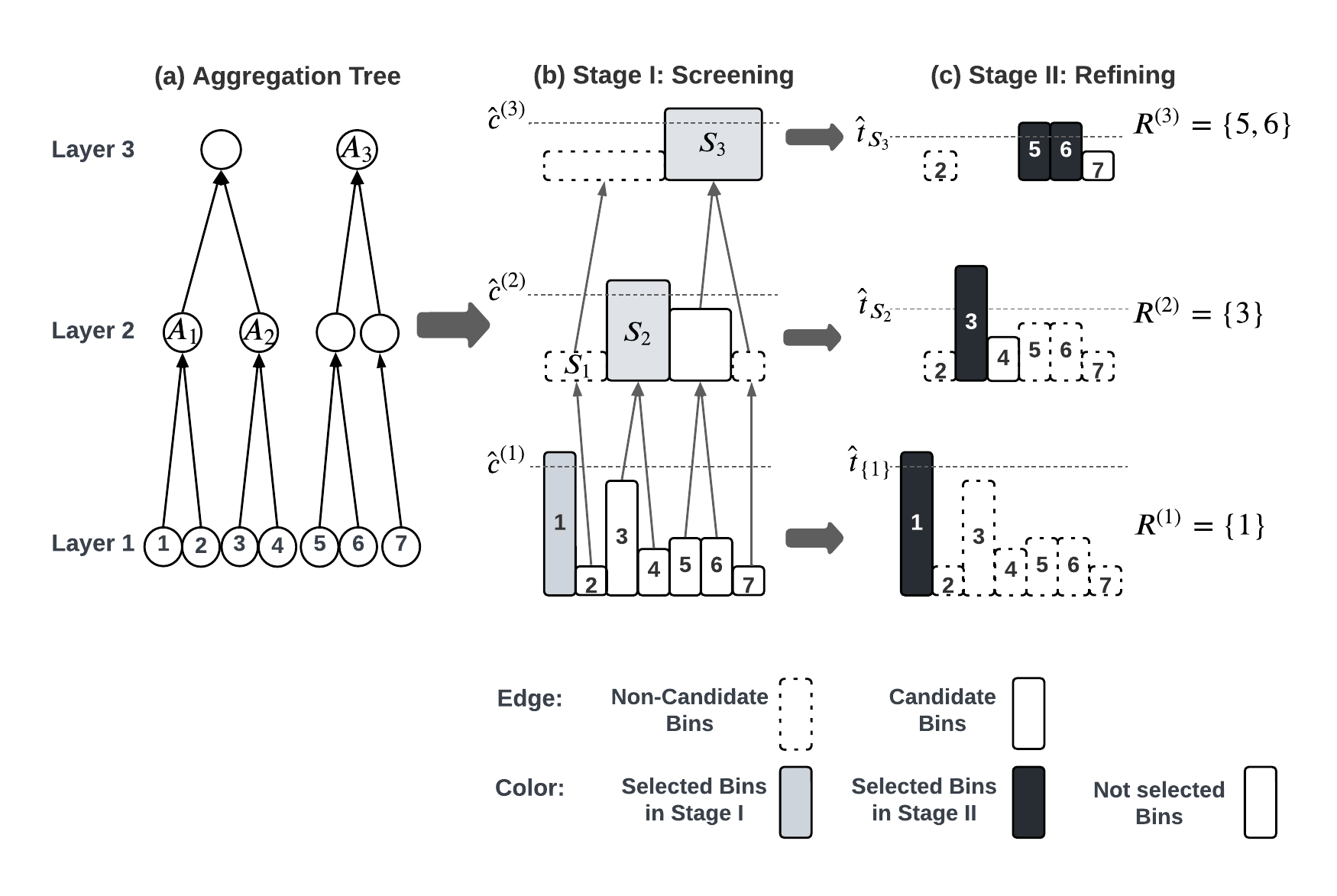}
\caption{An illustration example of DART2 procedure with $7$ features. (a) An aggregation tree obtained from prior knowledge, comprising $L=3$ layers with a maximum of two children allowed for each node ($M=2$); (b) Screening process embed in the aggregation tree, where each parent node is associated with a node-level hypothesis. Nodes on the aggregation tree are depicted as bins, with higher bins corresponding to nodes with a larger test statistics $T_{S}$; (c) 
Refining process for further selecting the features located within the nodes (bins) that were screened-out in Stage I. The rejected hypotheses on each layer is presented as $R^\el$ in the figure, and the final rejection set is $\mR=\{1,3,5,6\}$.}
\label{fig:method}
\end{figure}

\subsubsection*{Screening stage}

In the screening stage, DART2 will test all nodes (hypothesis sets) in the aggregation tree from the leaf layer to the top layer. The purpose is to screen some hypothesis sets that probably contain one or more alternative hypotheses. Here we introduce the technical details.

For any node $S$ on the aggregation tree, we define the node-level hypothesis as follows:
\begin{equation}\label{eq:nodehypo}
H_{0,S}: \text{ for all } j\in S, j\in\Omega_0 \text{ versus } H_{A,S}: \text{ there exists some } j\in S \text{ such that } j\in\Omega_1.
\end{equation}
Therefore, a screened-out node indicates strong evidence that it contains at least one alternative hypothesis. On layer 1, each node is a singleton set, and the node-level hypothesis is equivalent to the individual hypothesis. Thus, a screened-out node on layer 1 corresponds to a rejected hypothesis.

DART2's layer 1 node threshold is same as DART:
\begin{equation}
\label{eq:t1}
    \hat{c}^{(1)} = \inf \left\{
        \alpha_m \leq c \leq \alpha: \frac{\sum_{S\in\mB^\el}|S|\bar{\Phi}(c)}{\max\{\sum_{S\in\mB^\el}|S|I\{T_S>c\},1\}} \leq \alpha
    \right\},
\end{equation}
where $\alpha_m = (m \log m)^{-1}$, and $\bar{\Phi}(\cdot)$ is the complementary cumulative density function of standard Gaussian, defined as $\bar{\Phi}(\cdot ) = 1-\Phi(\cdot)$. This threshold has been used in other multiple testing literature \citep{liu2013gaussian,xie2018false}.

On layer $\ell\ (\ell\geq 2)$, we recursively define $\tilde{\mA}^\el$ as the node set on layer $\ell$ after removing the rejected hypotheses from the previous layer. The qualified node set $\mB^\el$ is defined as the set of nodes with at least two children; all nodes in $\mB^\el$ will be tested on layer $\ell$. We exclude those nodes with only one child because they have been tested before layer $\ell$. 

For any node $S\in \mB^\el$, we define the node-level statistic $T_S$ using the Stouffer aggregation \citep{stouffer1949american} for testing the node-level hypothesis \eqref{eq:nodehypo}:
\begin{equation}
  T_S=\sum_{i\in S}T_i/\sqrt{|S|}.
  \label{eq:ts}
\end{equation}
The node screening threshold $\hat{t}^\el$ is
\begin{equation}
  \label{eq:tl}
      \hat{c}^\el = \inf \left\{
          \alpha_m \leq c \leq \alpha^\el: \frac{\sum_{S\in\mB^\el}|S|\bar{\Phi}(c)}{\max\{\sum_{S\in\mB^\el}|S|I\{T_S>c\},1\}} \leq \alpha^\el
      \right\},
  \end{equation}
where $\alpha^\el$ is the layer-specific node FDR control level. Although in theory setting $\alpha^\el=\alpha$ still guarantees DART2's FDR control (see proof of theorem~\ref{thm:indfdr} in \ref{app:proof_thm}), numerical studies suggest that optimal performance in finite samples is achieved through a more conservative approach. Specifically, we recommend setting 
$\alpha^\el=\alpha/\max\{|S|:S\in\mB^\el\}$ for best finite sample performance suggested by numerical studies.

\subsubsection*{Refining stage}

If the ancillary information does not reflect hypothesis status the aggregation tree might fail to aggregate co-null or co-alternative hypotheses in the same node. The rejections of these nodes might only indicate that the node contains at least one alternative hypothesis, not that all hypotheses in the node are alternative. Therefore, we further define a refining threshold; among all hypotheses in the screened-out nodes, only those with testing statistics exceeding this threshold will be rejected.

For any screened-out node $S\in \Rr^\el$, 
\[T_S\geq \hat c^\el \Longleftrightarrow \sum_{i\in S}T_i/|S|\geq \hat c^\el/\sqrt{|S|}.\]
Thus, individual hypotheses with $T_i \geq \hat c^\el/\sqrt{|S|}$ contribute more to the node rejection. Intuitively, these hypotheses are more likely to be alternative. Based on this idea, we propose a naive refining threshold $\hat t^*_S=\hat c^\el/\sqrt{|S|}$ for any screened-out node $S\in\mB^\el$, and only reject hypotheses with $T_i\geq \hat t^*_S$.
\begin{thm}\label{thm:indfdr}
  Assume the number of alternative hypothesis $m_1=O(m^{r_1})$ for some
    $r_1<(M^{L-1}+1)^{-1}$, DART2 with naive refining threshold $\hat t^*_S$ controls the FDR at any pre-specified level $\alpha\in (0,1)$, i.e., $\lim_{m,n\to\infty}\FDR \leq \alpha$.
  \end{thm}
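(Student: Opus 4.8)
The plan is to bound the false discovery proportion $\FDP=V/\max\{R,1\}$, where $V$ counts rejected nulls and $R$ counts all rejections, by reducing everything to the individual level and exploiting the self-normalizing structure of the thresholds \eqref{eq:t1} and \eqref{eq:tl}. Two ingredients drive the argument. First, since each $T_i$ is asymptotically $\N(\theta_i,1)$ with $\theta_i\le 0$ under the null (this is where $n\to\infty$ enters), and the Stouffer statistic $T_S=\sum_{i\in S}T_i/\sqrt{|S|}$ of a node $S$ whose members are all null is stochastically dominated by $\N(0,1)$, the factors $\bar\Phi(\cdot)$ in the numerators of \eqref{eq:t1} and \eqref{eq:tl} are genuine conservative overestimates of the corresponding null exceedance counts. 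Second, I would establish a uniform empirical-process bound of the form $\sup_{\alpha_m\le c\le\alpha}\big|\sum_{i\in\Omega_0}I\{T_i>c\}/(m_0\bar\Phi(c))-1\big|=o_p(1)$, together with its node-level analogue, so that the data-dependent thresholds $\hat c^\el$ may be replaced by their deterministic counterparts without disturbing the ratios. This is the maximal-inequality lemma inherited from the DART analysis of \citet{li2023dart} and requires the stated weak-dependence regularity on $(T_i)$.

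Given these, the first substantive step is to control the weighted node-level FDP on each layer. By construction $\hat c^\el$ forces $\sum_{S\in\mB^\el}|S|\bar\Phi(\hat c^\el)\le\alpha^\el\max\{\sum_{S\in\mB^\el}|S|I\{T_S>\hat c^\el\},1\}$; combining this with the uniform bound and the sub-Gaussianity of null node statistics yields $\sum_{S\in\mB^\el,\,S\subseteq\Omega_0}|S|\,I\{T_S>\hat c^\el\}\le \alpha^\el(1+o_p(1))\sum_{S\in\mB^\el}|S|\,I\{T_S>\hat c^\el\}$, i.e.\ the total size of falsely screened-out null nodes is a controlled fraction of the total screened mass on that layer.

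The crux is converting this node-level statement into control of the individual-level $V$ after refining. Here I would use the stated equivalence $T_S>\hat c^\el\Leftrightarrow |S|^{-1}\sum_{i\in S}T_i>\hat c^\el/\sqrt{|S|}$ to analyze the refining threshold $\hat t^*_S=\hat c^\el/\sqrt{|S|}$, splitting the refined false discoveries on layer $\ell$ into those from nodes containing a genuine alternative and those from purely null nodes screened out by chance. The former come from at most $m_1$ nodes per layer, each of size at most $M^{L-1}$, so after the extreme-value truncation imposed by $\hat t^*_S$ their null content obeys the same tail estimate as the latter. For a null $i$ in a null node $S$ of size $s\le M^{L-1}$, the key estimate is a large-deviation bound on the joint event $\{T_S>\hat c^\el\}\cap\{T_i\ge \hat c^\el/\sqrt{|S|}\}$: conditionally on such a node being screened the refining threshold retains only its extreme members, and summing these probabilities against the deterministic node threshold returns the weighted node-FDP already controlled above, up to a factor governed by $s$. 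Carrying this through all layers and adding the layer-$1$ contribution, the self-normalizing inequalities should assemble into $V\le\alpha(1+o_p(1))\max\{R,1\}$, and taking expectations with a standard truncation/uniform-integrability step gives $\lim_{m,n\to\infty}\FDR\le\alpha$.

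I expect the main obstacle to be precisely this final accounting. The screened-out node set $\Rr_\node$ is selected adaptively from the data, so the refined false discoveries across the $L$ layers are dependent and could a priori accumulate to an $L$-fold multiple of $\alpha$ rather than staying within budget. The sparsity exponent $r_1<(M^{L-1}+1)^{-1}$ is what makes the accounting close: with maximal node size $M^{L-1}$, the probability that a null node is screened \emph{and} contributes a refined null rejection decays like a power of $\bar\Phi(\hat c^\el)$ which, multiplied by the number of eligible (node, member) pairs, is $o(1)$ exactly when $m_1^{\,M^{L-1}+1}=o(m)$. Making the $o_p(1)$ remainders uniform over the random thresholds and over all layers \emph{simultaneously}, rather than one layer at a time, is the delicate part, and is where the weak-dependence conditions and the deterministic-threshold replacement lemma must be deployed most carefully.
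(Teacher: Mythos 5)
Your overall architecture matches the paper's: layer-wise control of the weighted node-level FDP via the self-normalized thresholds \eqref{eq:t1} and \eqref{eq:tl} together with DART-type uniform empirical-process ratio bounds, followed by a conversion to individual-level FDP that splits the refined rejections into those from alternative-containing screened nodes and those from purely null screened nodes. The conversion step, which you correctly identify as the crux, is where your accounting breaks, in two places. First, you misplace the sparsity condition. In the paper's proof (display \eqref{eq:key}), $r_1<(M^{L-1}+1)^{-1}$ is used \emph{only} for the alternative-containing nodes: a union bound over the $O(m^{r_1})$ null members of such nodes, each passing the naive threshold with probability $\bar\Phi(\beta_m/\sqrt{M^{L-1}})=o(m^{(r_1-1)/M^{L-1}})$, shows that with probability tending to one these nodes contribute \emph{zero} false rejections. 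Your claim that ``the probability that a null node is screened and contributes a refined null rejection decays like a power of $\bar\Phi(\hat c^\el)$'' fails for purely null nodes: since the maximum of $\{T_i:i\in S\}$ always exceeds the average, the event $T_S\geq\hat c^\el$ forces at least one $T_i\geq\hat t^*_S=\hat c^\el/\sqrt{|S|}$, so the joint probability equals the screening probability $\approx\bar\Phi(\hat c^\el)$; summed over the roughly $m$ null nodes this is of the order of $\alpha$ times the number of rejections, not $o(1)$. These rejections are exactly what consume the $\alpha$ budget; no sparsity exponent can make them vanish.

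Second, for precisely these purely null screened nodes the paper needs, and proves, Lemma~\ref{lem::nullsig}: conditional on $T_S\geq\hat c^\el$, the probability that \emph{two or more} members of a null node exceed $\hat t^*_S$ tends to zero (for $|S|=2$ this is the computation $\bar\Phi(a/\sqrt{2})^2/\bar\Phi(a)\to 0$ as $a\to\infty$). Hence each screened null node contributes exactly one false rejection, while every screened node contributes at least one rejection, so the individual-level layer FDP is bounded by the node-level layer FDP with no residual size factor. Your substitute --- summing joint large-deviation probabilities to recover ``the weighted node-FDP \ldots up to a factor governed by $s$'' --- is exactly the step that must \emph{not} carry a factor governed by $s$: any such factor survives into the final bound and yields $\lim\FDR\leq Cs\alpha$ rather than $\alpha$, so your accounting as stated does not close. (A smaller point: the cross-layer accumulation you flag as the main delicacy is dispatched in the paper by the mediant inequality --- rejection sets on distinct layers are disjoint, so the overall FDP is at most $\max_\ell$ of the layer-wise FDPs --- not by the sparsity condition.)
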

Although Theorem~\ref{thm:indfdr} shows the Naive refining threshold $\hat t^*_S$ provides asymptotic validity in theory, we found that it is too liberal in practice. Numerical studies suggest that the naive method might lead to moderate FDR inflation under finite sample and hypothesis case. One possible reason of the inflation is that, higher-layer nodes contain many hypotheses and $\hat t^*_S$ may be too small to exclude null hypotheses. To address this, a more robust and conservative refining threshold is proposed.

\begin{definition}[Robust refining threshold]
  \label{def:rrt}
  For a screened-out node $S\in\mB^\el$,
  its robust refining threshold is defined as 
  \begin{equation}
  \label{eq:rrt}
  \hat t_S=\min\{\hat t_{S,1},\hat t_{S,2}\}\text{, with } \hat t_{S,1}=\max\{\hat t^*_S,\bar\Phi^{-1}(\alpha)\}\text{ and } \hat t_{S,2}=\max\{T_i,i\in S\}.
  \end{equation}
  \end{definition}

  The robust refining threshold introduces the lower bounds $\bar\Phi^{-1}(\alpha)$ in $\hat t_{S,1}$ to avoid excessively small thresholds, where $\bar\Phi(\cdot)$ is the complement cumulative density function of the standard Gaussian. Additionally, because the screened-out nodes is considered to contain at least one alternative hypothesis, we set $\hat t_{S,2}$ to ensure at least one hypothesis will be rejected.
  The following theorem ensures the asymptotic FDR control by applying the robust refining threshold.
\begin{thm}\label{thm:robfdr}
Assume the number of alternative hypothesis $m_1=O(m^{r_1})$ for some
  $r_1<(M^{L-1}+1)^{-1}$, DART2 with robust refining threshold controls the FDR at any pre-specified level $\alpha\in (0,1)$, i.e., $\lim_{m,n\to\infty}\FDR \leq \alpha$.
\end{thm}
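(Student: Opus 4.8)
The plan is to reuse the screening-stage machinery already developed for Theorem~\ref{thm:indfdr} and to redo only the refining-stage accounting, since the robust threshold differs from the naive one solely through the floor $\bar\Phi^{-1}(\alpha)$ in $\hat t_{S,1}$ and the cap $\hat t_{S,2}=\max\{T_i:i\in S\}$. The organizing observation is that the robust threshold dominates the naive one: for any screened-out node $S$ we have $T_S>\hat c^\el$, hence $|S|^{-1}\sum_{i\in S}T_i=T_S/\sqrt{|S|}>\hat c^\el/\sqrt{|S|}=\hat t^*_S$, so $\hat t_{S,2}=\max_{i\in S}T_i\geq |S|^{-1}\sum_{i\in S}T_i>\hat t^*_S$ while $\hat t_{S,1}\geq \hat t^*_S$ by definition; therefore $\hat t_S=\min\{\hat t_{S,1},\hat t_{S,2}\}\geq \hat t^*_S$. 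Thus the robust rejection set is contained in the naive one, and the difficulty is concentrated in the denominator, where the cap plays the decisive role: because $\hat t_S\leq \hat t_{S,2}$, every screened-out node still rejects at least its maximizer, so the number of rejections is at least the number of screened-out nodes. Since $\FDR$ is a ratio, the containment does not transport the control of Theorem~\ref{thm:indfdr} automatically, so I would argue directly.

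First I would record the dichotomy created by the $\min$. For a screened-out node $S$, either $\hat t_{S,1}\leq \hat t_{S,2}$ (Case A), so the threshold equals $\hat t_{S,1}\geq \bar\Phi^{-1}(\alpha)$ and every rejected member satisfies $T_i\geq \bar\Phi^{-1}(\alpha)$; or $\hat t_{S,2}<\hat t_{S,1}$ (Case B), so the threshold equals $\max_{i\in S}T_i$ and only the node maximizer is rejected, contributing at most one discovery. I would then partition the screened-out nodes into \emph{null nodes}, whose members all lie in $\Omega_0$, and \emph{alternative nodes}, which contain at least one member of $\Omega_1$, and write the false-discovery count as the sum of the contributions of the two families together with the layer-1 rejections.

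The core estimates are then three. (i) The layer-1 cutoff $\hat c^{(1)}$ is the same self-normalizing BH-type threshold as in DART, and its FDR contribution is controlled at level $\alpha$ exactly as in Theorem~\ref{thm:indfdr}. (ii) Because each alternative hypothesis lies in at most one screened-out node (screened hypotheses being deleted from higher layers), there are at most $m_1$ alternative nodes, each of size at most $M^{L-1}$; their total membership, an upper bound on the false discoveries they can produce, is at most $m_1M^{L-1}=O(m^{r_1}M^{L-1})$, and the exponent condition $r_1<(M^{L-1}+1)^{-1}$ is precisely what renders this quantity a vanishing fraction of the rejection count, so the alternative-node contamination is asymptotically negligible in the FDP. (iii) For null nodes, a Case-A node contributes only members with $T_i\geq \bar\Phi^{-1}(\alpha)$, whose null exceedance probability is at most $\alpha$ since $\theta_i\leq 0$ forces $\P(T_i\geq t\mid i\in\Omega_0)\leq\bar\Phi(t)$, whereas a Case-B node contributes at most one false discovery; both are then tied back to the screening threshold, which by construction keeps the size-weighted null exceedance $\sum_{S\in\mB^\el}|S|\bar\Phi(\hat c^\el)$ below $\alpha^\el$ times the size-weighted rejection count. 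Assembling (i)--(iii) and letting $m,n\to\infty$ (the growing sample size validating the Gaussian tail bounds) would give $\limsup\FDR\leq\alpha$.

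The hard part will be step (iii) for null nodes in Case B. A screened-out all-null node always forces one rejection---its maximizer---and this maximizer may have $T_i<\bar\Phi^{-1}(\alpha)$, so it escapes the floor entirely; such false discoveries are admissible only if screened-out null nodes are rare relative to the total number of rejections. Turning this into a quantitative bound requires transferring the node-level screening guarantee, which is stated in the size-weighted form $\sum|S|\bar\Phi(\hat c^\el)$ versus $\sum|S|I\{T_S>\hat c^\el\}$, into a hypothesis-level count in which each Case-B null node contributes a single unweighted false discovery; reconciling this weighting mismatch while carrying along the data-dependence of the random thresholds $\hat c^\el$ (hence of the $\hat t_S$) and the sequential deletion of screened hypotheses across layers is, I expect, the technical heart of the argument. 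Once this transfer is established and the alternative-node contamination is shown negligible under $r_1<(M^{L-1}+1)^{-1}$, the remaining steps parallel Theorem~\ref{thm:indfdr}.
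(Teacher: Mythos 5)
Your opening move coincides with the paper's: for a screened-out node, $T_S>\hat c^\el$ forces $\max_{i\in S}T_i\geq T_S/\sqrt{|S|}>\hat t^*_S$, so $\hat t_S\geq\hat t^*_S$ while the cap $\hat t_{S,2}$ still guarantees at least one rejection per screened-out node. The paper's proof essentially stops there: it redefines $V^\el$, $W^\el$, $R^\el$ with the robust threshold, notes that these two facts let the key display \eqref{eq:key}, namely $\P(\forall j\in W^\el,\ j\in\Omega_1)\to 1$, go through unchanged, and then repeats the proof of Theorem~\ref{thm:indfdr} verbatim. Where you depart from this reduction, your argument breaks.

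The genuine gap is your estimate (ii). You bound the false discoveries produced by mixed (``alternative'') nodes by their total membership $m_1M^{L-1}=O(m^{r_1}M^{L-1})$ and claim the exponent condition makes this a vanishing fraction of the rejection count. That is false: the rejection count is itself at most the total membership of the screened-out nodes, hence of order $m_1$ in the regime of interest (with $M^{L-1}$ a constant), so $m_1M^{L-1}/|R|$ is bounded below by a positive constant. No pure counting argument can work here, because nothing in the counting prevents every null member of every mixed node from crossing the refining threshold, which would drive the $\FDP$ toward $(M^{L-1}-1)/M^{L-1}$. The exponent condition plays a different, probabilistic role, visible in \eqref{eq:key}: the refining threshold is bounded below by a quantity of order $\beta_m/\sqrt{M^{L-1}}$, there are $O(m^{r_1})$ null hypotheses inside screened-out mixed nodes, and a union bound against the Gaussian tail gives $m^{r_1}\cdot o(m^{(r_1-1)/M^{L-1}})=o(1)$ exactly when $r_1<(M^{L-1}+1)^{-1}$. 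The correct conclusion is therefore that mixed nodes contribute \emph{zero} false discoveries with probability tending to one --- not an asymptotically negligible fraction --- and that is what allows the node-level screening guarantee to transfer to the hypothesis level. Relatedly, the step you defer as the ``technical heart'' (Case-B all-null nodes) is never resolved in your proposal; the paper disposes of it with Lemma~\ref{lem::nullsig}, which says that, conditionally on screening, an all-null node contributes at most one rejection with probability tending to one, and this pairs with the one-rejection-per-node lower bound on $|R^\el|$. Your marginal bound $\P(T_i\geq\bar\Phi^{-1}(\alpha)\mid i\in\Omega_0)\leq\alpha$ for Case A cannot substitute for it, since per-hypothesis rejection probabilities do not control a ratio such as the FDP. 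In short, your reduction-to-Theorem-\ref{thm:indfdr} frame is the right one, but closing it requires replacing (ii) with the union-bound argument of \eqref{eq:key} and (iii) with Lemma~\ref{lem::nullsig}.
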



\section{Numerical Experiments} \label{sec:simu}

We set up $m=1000$ hypotheses. The hypothesis status is determined by $\{\eta_i: i\in[m]\}$, which are independently generated by the following rules:
\begin{equation*}
    \eta_{i} =\Big\{\{[3.4\phi_1(d_{156,i})-0.6]\vee 0\}+3\phi_2(d_{800,i})-0.1\Big\}\vee 0, 
\end{equation*}
where $\phi_1$ and $\phi_2$ are the probability density functions of
$\N(0,1)$ and $\N(0,0.1)$, respectively. If $\eta_i=0$, then hypothesis $i$ is null; otherwise, it is alternative. After fixing the random seed to generate $\eta_i$, we obtained 216 alternative hypotheses and the rest are null. Our aim is to test these hypotheses with a desired FDR $\alpha \in \{1\%, 5\%\}$.

Suppose all hypotheses are in a two-dimensional Euclidean space. Under the ideal case, shorter distances between hypotheses imply a higher likelihood of two hypotheses' co-null or co-alternative status. Especially, for our numerical experiments, the alternative hypotheses are concentrated within two clusters, as shown in Appendix Figure \ref{appfig:eta}. Thus, under the ideal case, the distances can be treated as helpful ancillary information to improve testing accuracy. 

To evaluate method robustness, we also considered non-ideal scenario where the ancillary information is less helpful. We introduced a misleading level $\tau \in [0,1]$ to represent the proportion of alternative hypotheses randomly switched with null hypotheses. Thus, $\tau=0$ representing fully informative ancillary information and $\tau=1$ indicating completely non-informative (misleading) ancillary information. We varied $\tau\in\{0,0.2,0.4,0.6,0.8,1.0\}$ to systematically evaluate and compare DART2's performance against other methods.

Regardless of the helpfulness of the ancillary information, we applied the aggregation tree construction algorithm in DART\citep{li2023dart} to build the aggregation tree. The maximum number of children for each node is set as $M=2$, and by setting $c_m=5$, the maximum number of layers is $L=7$.

We used four settings to generate test statistics, denoted as SE1–SE3. SE1 simulated test statistics following Gaussian distribution. SE2 simulated test statistics with intentionally misspecified null distributions to assess robustness. SE3 and SE4 correspond to Wald test statistics from a linear regression model and the Cox proportional hazard model, respectively. The details of the simulation settings are provided in \ref{app:simu}.
Each simulation for a specific test statistic type comprises 200 repetitions.

\subsubsection*{Evaluating DART2's validity and robustness}

We applied DART2 to all settings and evaluated its performance when the aggregation tree layer number $L$ and the ancillary information misleading level $\tau$. The performance is measured by average false discovery proportion  and sensitivity over the 200 repetitions. If DART2 stopped at layer 1, then this is asymptotically equivalent to the Benjamini-Hochberg (BH) procedure\citep{benjamini1995controlling}. The results are shown in Figure \ref{fig:layerwise}. 

Based on the number of hypotheses $m=1000$, we reach the root of the aggregation tree when $L=13$; thus, $13$ is the maximum possible number of layers in the aggregation tree. Across all $L$ values, DART2 consistently controls the FDR under the desired level $\alpha$. This indicates that increasing testing layers will not inflate the FDR. The testing sensitivity increases with $L$ when $L$ reaches $7$ and keeps stable afterwards. This indicates that DART2 has already reached the maximum power at $L=7$. To save computation time, there is no need to further increase the layer number. This fits our suggested aggregation tree construction criteria in Section~\ref{sec:tree}. 

Across all misleading level $\tau$, under all cases, DART2 controls the average FDP within the desired level $\alpha$. Besides,
as $\tau$ increases, DART2's sensitivity decreases a little, but still maintains a high level. Specifically, when $\tau=1$, DART2's power still increases as $L$ increases; suggesting DART2's superior performance than the BH procedure even when the ancillary information is completely misleading. 
This is because DART2 uses an aggregation structure to test aggregated node hypotheses; even if the ancillary information does not help to construct the informative aggregation tree, DART2 may still identify some additional true alternatives by chance.

\begin{figure}[!htpb]
    \includegraphics[width=\textwidth]{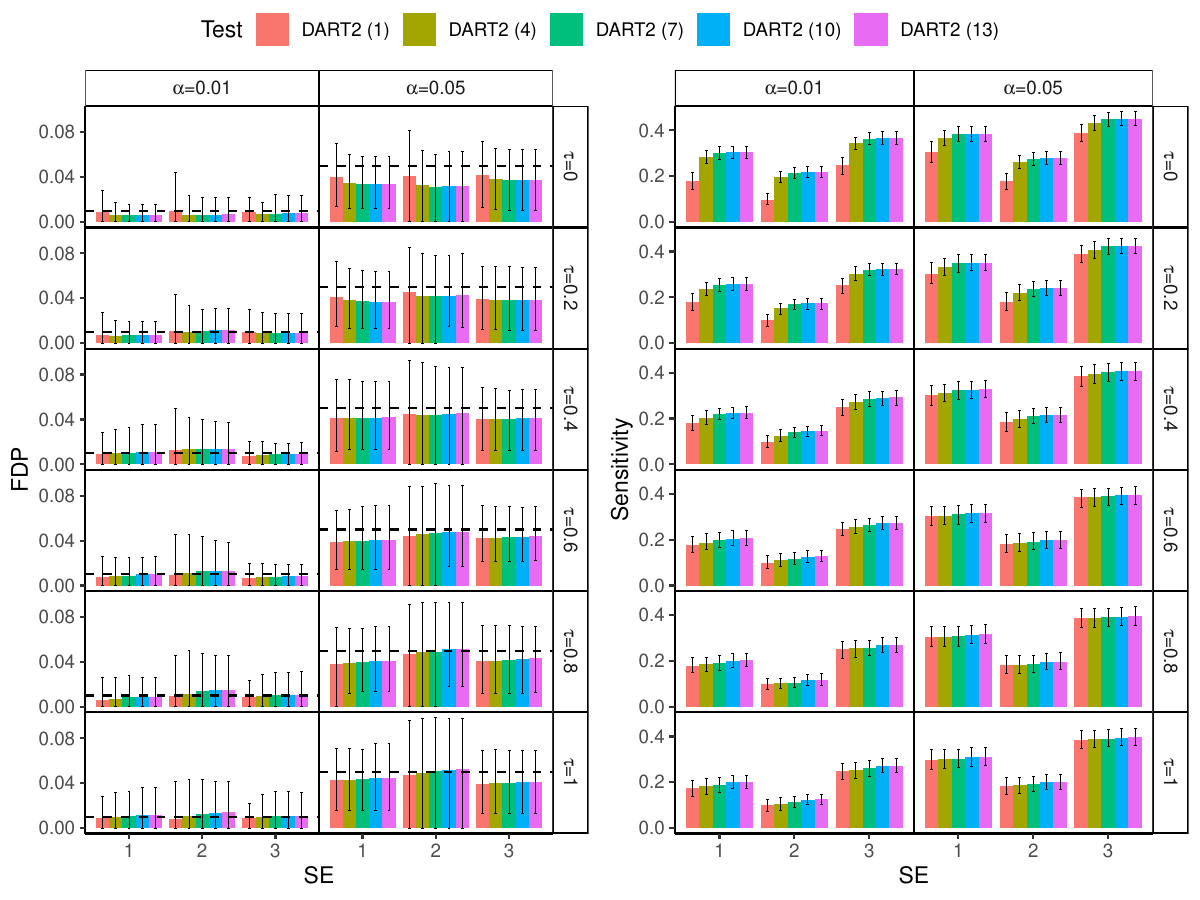}
    \caption{Performance of DART2 was evaluated across varying numbers of layers, with desired feature-level FDR $\alpha \in \{1\%, 5\%\}$, and misleading level $\tau \in \{0,0.2,0.4,0.6,0.8,1\}$. The bars indicate DART2's average performance (FDP and sensitivity) in testing $m=1000$ hypotheses, while the error bars the $90\%$ confidence intervals (the $5\%$ and $95\%$ quantiles) over the $200$ repetitions. The left panel shows the average FDP, with dashed horizontal lines indicating the desired FDR level $\alpha$. The right panel shows the average sensitivity. }
    \label{fig:layerwise}
\end{figure}

\subsubsection*{Comparing with Competing methods}

We compare the performance of DART2 with four different methods: BH, DART, AdaPT \citep{lei2018adapt}, and FDR$_L$ \citep{zhang2011multiple}. AdaPT is an iterative FDR control procedure that incorporates side information into testing. It proposed an EM algorithm to parametrically estimate the proportion of nulls and the densities of P-values to implement the method. Meanwhile, FDR$_L$ incorporates side information by aggregating p-values from each hypothesis’s $k$ nearest neighbors, using these aggregated p-values to conduct multiple testing. To simplify comparison, for DART2, we used the aggregation tree with $M=2$ and $L=7$ following the optimal tree construction criteria in Section~\ref{sec:tree}. Figure~\ref{fig:methods} shows the average FDP and sensitivity of each method. The FDP and sensitivity error bars are shown in Appendix Figure~\ref{fig:methods_bar}.

As expected, BH procedure's performance is almost the same as DART2 with $L=1$. It can successfully control the desired FDR no matter whether the ancillary information is helpful or not; however, BH's sensitivity is too low when the ancillary information is helpful or partially helpful. This is because BH completely ignores the ancillary information. 

Two competing methods, DART and FDRL maintained higher sensitivity while controlling average FDP at the desired level when $\tau=0$. However, their average FDP inflated as $\tau$ increases: DART has a moderate inflation, and FDRL has the largest inflation. Also, we observed a wide $90\%$ FDP confidence interval for DART and FDRL (Figure~\ref{fig:methods_bar}). Since the simulation under $\tau>0$ involves randomly switching the null and alternative hypotheses under each repetition, the wider confidence intervals suggest that their testing results are not robust, highly depending on the ancillary information even if the misleading level is the same. 

AdaPT effectively controls the average FDP under the desired level, even when the ancillary information is misleading. When $\alpha=5\%$, AdaPT's sensitivity is higher than DART2 when $\tau\leq 0.2$. However, when $\alpha=1\%$, AdaPT's sensitivity is close to 0, significantly worse than DART2. This indicates that AdaPT's performance relies on the quality of ancillary information, and it fails to identify true alternatives when FDR control criterion is stringent.

DART2 outperforms all competing methods in terms of both average FDP and sensitivity. DART2 maintains the average FDP within the desired level $\alpha$ and has the highest sensitivity across all settings. This indicates that DART2 is robust and powerful regardless of the quality of ancillary information and the desired FDR level.

DART2 also demonstrates significantly higher computational efficiency. Across various simulation settings and violation levels, the median computation time for a single repetition using DART2 is 17.7 seconds. In contrast, AdaPT has $11.7\%$ runs that fail to converge within 1 hour; among those that successfully converge within 1 hour, the median computation time is 540.9 seconds. This is approximately 30 times longer than that required for DART2. More details for the median computation time and the average converging percentage by simulation settings and misleading levels are presented in Table~\ref{tab:time} and Table~\ref{tab:fails}.


\begin{table}[ht]
\centering
\caption{Mean and standard deviation of the computation time for DART2 and AdaPT*}
\label{tab:time}
\begin{tabular}{cccccccc}
  \hline
  \multirow{2}{*}{$\tau$}&\multicolumn{3}{c}{AdaPT}&&\multicolumn{3}{c}{DART2}\\ \cline{2-4}\cline{6-8}
& SE=1 & SE=2 & SE=3 && SE=1 & SE=2 & SE=3\\ 
 \hline
0.00 & 941.1 (891.4) & 807.6 (725) & 1113.3 (993.9) && 18.1 (2.8) & 12 (2.1) & 23.1 (3.2) \\ 
  0.20 & 539.3 (542.4) & 550.7 (552.7) & 480.7 (392.7)&  & 18 (2.8) & 12 (2.1) & 23.2 (3.1) \\ 
  0.40 & 465.6 (282.2) & 467.3 (294.2) & 455.4 (232.3)&  & 17.9 (2.8) & 12 (2) & 23.2 (3.1) \\ 
  0.60 & 466.4 (215.1) & 442.5 (78.7) & 456.8 (68.6)&  & 17.7 (2.8) & 11.9 (2.1) & 23 (3.2) \\ 
  0.80 & 448.9 (44.6) & 437.5 (45.7) & 440.9 (36.3)&  & 17.9 (2.8) & 12.1 (2.1) & 23.1 (2.9) \\ 
  1.00 & 439.1 (42.7) & 471.3 (256.4) & 439.6 (184.6)&  & 17.9 (2.9) & 11.5 (2.1) & 21.8 (3.5) \\   \hline
\end{tabular}
\end{table}
\footnotesize{*The mean and standard deviation are presented in the format of mean (standard deviation). The computation time for AdaPT is calculated based only on those repetitions that successfully converged within 1 hour.}

\begin{figure}[!htpb]
    \includegraphics[width=\textwidth]{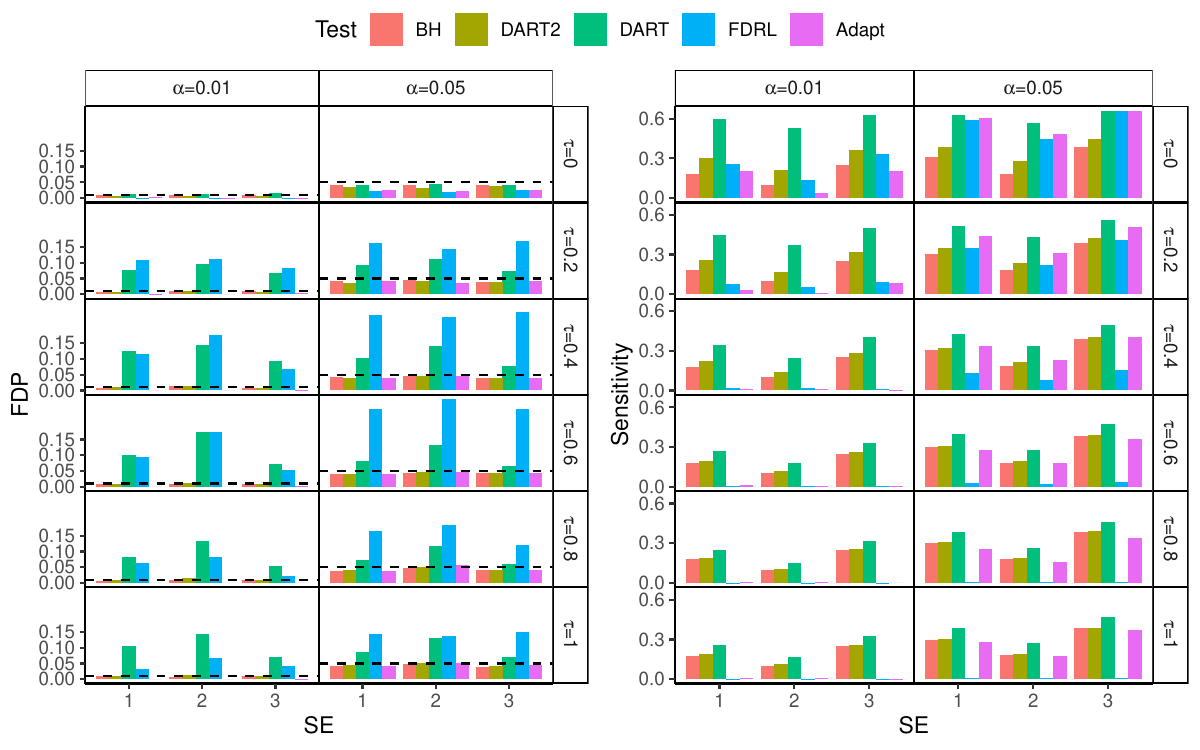}
    \caption{Performance comparison of the 7-layer DART2 with the competing method under different types of testing statistics, different desired FDR level $\alpha$ and different misleading level $\tau$. The primary bars indicate the average performance over $200$ repetitions.
    The left panel shows the average feature-level FDP, with dashed horizontal lines indicating the desired FDR $\alpha$. The right panel shows the average feature-level sensitivity.}
    \label{fig:methods}
\end{figure}


\section{Real Data Application}
    \label{sec:real-data}

We applied DART2 to a breast cancer study, with pre-analyzed P-values to indicate the significance of differential gene expressions in response to estrogen in breast cancer cells. The data is originally from the NCBI Gene Expression Omnibus (GEO) database with the access number GSE4668. It can also be accessed through the 'GEOquery' package \citep{davis2007geoquery}. The dataset contains $m=22,283$ genes in response to estrogen treatments in breast
cancer cells with different dosage levels. The objective is to identify the genes that respond to a low dosage of treatment. The analysis involves comparing gene expression levels between a control group (receiving a placebo) and a low-dose treatment group, using p-values to assess the evidence for changes in gene expression. 
The processed data includes two distinct sets of gene orderings: 1) a highly informative ordering obtained based on the
data from patients receiving higher dosage level of treatment, with genes ordered based on the response level to these higher dosages, and 2) a median informative ordering obtained based on the data from patients receiving intermediate dosage level treatment, organizing genes based on their response level to this intermediate dose level. Previously, \citet{lei2018adapt} analyzed this dataset and derived P-values and orderings for each gene to indicate if their gene expressions have been significantly altered by estrogen. 

Utilizing the highly informative ordering and the median informative ordering, we designed a study to check the performance of all methods when the ancillary information is partially informative. First, We set up the hypotheses for each gene to test if their differentially expressed.  Next, we applied DART2 and four competing methods BH, DART, AdaPT, and FDRL to the P-value statistics together with the highly informative ordering as the ancillary information with the desired FDR $\alpha=5\%$. The genes selected by at least two methods are set as the benchmark. Subsequently, we implement all methods with the median informative ordering, and then compare each method's testing results with the benchmark. We used sensitivity and precision to measure the performance.
\begin{align*}
    &\text{Sensitivity}=\frac{\text{Number of genes identified also included in benchmark} }{\text{Number of genes included in benchmark}}\\
    &\text{Precision}=\frac{\text{Number of genes identified also included in benchmark} }{\text{Number of genes identified}}
\end{align*}

Figure~\ref{fig:contour} illustrates the performance for all methods except for the BH procedure. The BH procedure is not shown since it failed to identify any genes. To facilitate a clearer comparison, the F1 score contours are included. The F1 score represents the harmonic mean of precision and sensitivity, thus serving as an overall accuracy metric: a high F1 score indicates high accuracy. FDRL is characterized by low sensitivity and precision, while DART is noted for its high sensitivity but significantly low precision. AdaPT stands out for its high precision, albeit with lower sensitivity. DART2, on the other hand, achieves the highest F1 score among all methods, indicating its superior overall performance. These results align well with our simulation studies, further demonstrating DART2's robustness and power in real data applications.
 
\begin{figure}[!htpb]
    \includegraphics[width=\textwidth]{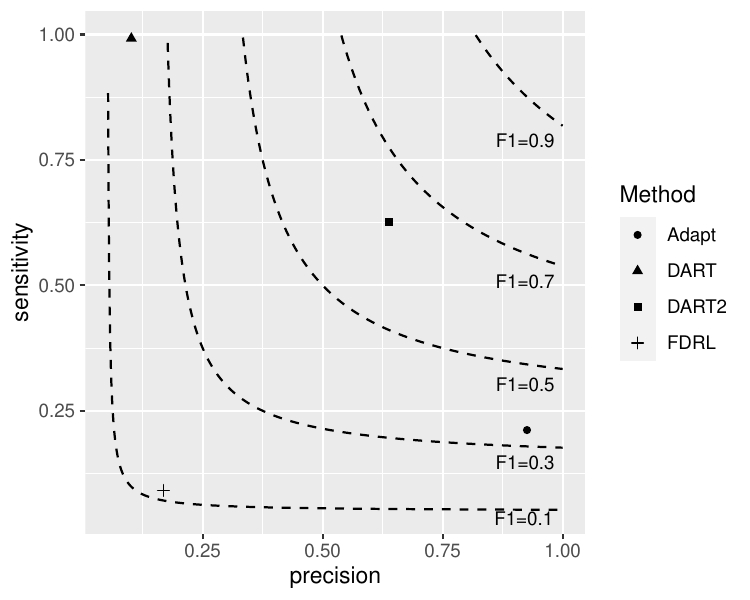}
    \label{fig:contour}
    \caption{F1 score contour plots to compare the performance of DART2, AdaPT, DART and FDRL. The dashed line represents the F1 score contour.}
\end{figure}
            
    
\section{Discussion}\label{sec:discussion}

In this paper, we introduced DART2, a novel two-stage robust hypothesis testing method to automatically adapt to the quality of ancillary information. 
DART2's stage 1 is very similar to DART. It hierarchically aggregates hypotheses based on their ancillary information and tests them at different resolution levels. The major difference between DART and DART2 is that DART2 has an additional stage 2, the refining stage, which ensures FDR control by selectively rejecting hypotheses in the screened-out hypothesis set. In stage 2, each individual hypothesis is tested again separately, to ensure the hypothesis level asymptotic FDR level control. Thus, when ancillary information is informative, it will improve power while maintaining asymptotic FDR control. When ancillary information is misleading, DART2 still asymptotically control FDR and maintain power at least as high as the BH procedure. We demonstrated the superior performance of DART2 through numerical studies under various settings and applied it to a gene association study, where we showed its superior accuracy and robustness under two different types of ancillary information. 

In our numerical studies, we also found that even if the ancillary information is misleading, it is still possible that DART2 can identify some additional true alternatives compared with the BH procedure. Thus, it is possible that we can further improve the power of DART2 by trying multiple random orderings. We will explore this direction in our future studies.

\newpage
\gdef\thesection{Appendix \Alph{section}}
\setcounter{section}{0}
\setcounter{subsection}{0}
\setcounter{equation}{0}
\setcounter{figure}{0}
\setcounter{table}{0}
\setcounter{page}{1}
\renewcommand\theequation{\arabic{equation}}

\renewcommand{\thepage}{A\arabic{page}}  
\renewcommand{\thetable}{A\arabic{table}}   
\renewcommand{\thefigure}{A\arabic{figure}}
\renewcommand{\theequation}{A\arabic{equation}}

\section{Simulation details}\label{app:simu}

\subsection{Simulated data}

We generated three simulation settings, each with n = 300 observations on m = 1000 hypotheses.
Before presenting the three settings, let's first define the notations that will be consistently used across all of them:
\begin{equation*}
\eta_{i}  =\Big\{\{[5.1\phi_1(d_{156,i})-0.9]\vee 0\}+4.5\phi_2(d_{800,i})-0.1\Big\}\vee 0;
\end{equation*}
where $\phi_1$ and $\phi_2$ are the PDF of
$\N(0,1)$ and $\N(0,0.1)$, respectively.

\begin{enumerate}[label=SE{\arabic*}:,align = left,leftmargin=*]
\item For node $i\in \{1,...,m\}$, the feature testing statistics $T_i$
  are independently generated from $\N(\sqrt{n}\theta_i,1)$, with
$\theta_{i}=
\frac{1}{5}\eta_{i}$.
\item Consider the linear mode
$$Y_i=\theta_{0,i}+\theta_{1,i}W_1+\theta_{2,i}W_2+\epsilon_i,\quad \text{with }\epsilon\sim N(0,1)$$
The covariates $W_{1}$ and $W_{2}$ are sampled from
  $\mathrm{Binom}(0.5)$ and $\mathrm{Unif}(0.1,0.5)$, respectively.
  Also let $\theta_{0,i}=\theta_{2,i}=0.1$ and
$\theta_{i}=\theta_{1,i}=\frac{1}{3}\eta_i$.
The feature test statistic $T_i$ is the Wald test statistics.
\item Consider the cox regression model
\begin{equation*}
\lambda_i(t)=\lambda_{0i}(t)\exp\{\theta_{1,i}W_{1}+\theta_{2,i}W_{2}\}
\end{equation*}
Where $\lambda_i(t)$ and $\lambda_{0i}(t)$ is the hazard and baseline hazard at time $t$, respectively. Set $\theta_{0,i}=\theta_{2,i}=0.1$ and
$\theta_i=\theta_{i}=\frac{1}{2}
\eta_{i}$. The covariates $W_{1}$ and $W_{2}$ are sampled from
$\mathrm{Binom}(0.5)$ and $\mathrm{Unif}(0.1,0.5)$, respectively. The
event time is generated from the exponential distribution with rate
$\exp\{\theta_{1,i}W_{1}+\theta_{2,i}W_{2}\}$, and the censoring time
is sampled from $\Unif(0,5)$. The feature test statistic $T_i$ is obtained
from the Wald test.
\end{enumerate}

\begin{figure}
  \begin{center}
    \includegraphics[width=\linewidth]{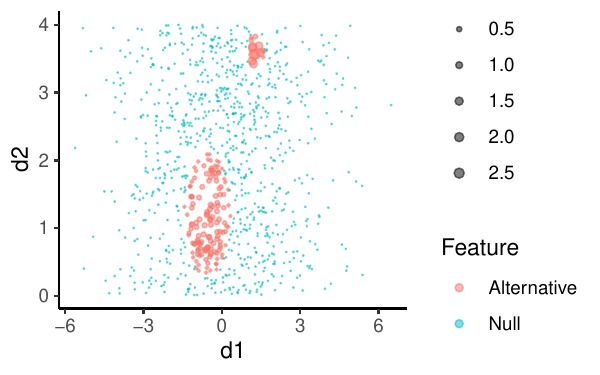}
  \end{center}
  \vspace{-2ex}
  \caption{Illustration of the simulated hypotheses' affiliated location and their corresponding $\eta_i$. Each hypothesis is represented by a dot. The dot color stands for the hypothesis status; and its size is $\text{L-eta}=\log(\eta_i+1)+0.01$, which is proportional to the signal strength.}
\label{appfig:eta}
\end{figure}

\subsection{Tunning parameter settings}
In our study, we applied various statistical methods to the three numerical settings including DART, DART2, BH procedure, the FDRL procedure, and AdaPT. The BH method does not require tuning parameters. For DART, following the tuning parameter selection criteria from Section 3.3 in \cite{li2023dart}, we constructed a 4-layer aggregation tree with $M=2$ and distance thresholds:
$g^{(2)}=1.33$, $g^{(3)}=1.56$, $g^{(4)}=1.90$. For RDART, following the similar parameter selection criteria, we constructed the aggregation tree up to 13 layers, with $M=2$ and distance thresholds:
$g^{(2)}=1.33$, $g^{(3)}=1.56$, $g^{(4)}=1.90$,$g^{(5)}=2.10$,
$g^{(6)}=2.60$,
$g^{(7)}=3.93$,
$g^{(8)}=4.13$,
$g^{(9)}=4.96$,
$g^{(10)}=5.40$,
$g^{(11)}=6.96$,
$g^{(12)}=9.58$,
$g^{(13)}=12.12$. For the FDR$_L$ methods, following recommendations from \cite{zhang2011multiple}, we set 
$k=5$ for our simulations, as it is suggested to use an odd number greater than three. \cite{zhang2011multiple} introduced two types of FDR$_L$ methods: FDR$_L$ I and FDR$_L$ II, but we only applied the FDR$_L$ II method in this paper given its consistent better performance compared to the FDR$_L$ I  \citep{zhang2011multiple,li2023dart}. With AdaPT, we followed the instructions found at \url{https://cran.r-project.org/web/packages/adaptMT/vignettes/adapt_demo.html} to set up its tuning parameters. 
During the simulation, we observed that AdaPT occasionally failed to produce results after extended processing times. To ensure the integrity of our findings, Figure~\ref{fig:methods} only includes data from repetitions where AdaPT successfully delivered results within 1 hour of processing. Table~\ref{tab:fails} summarizes the number of failed repetitions across different scenarios among 200 repetitions.

\begin{table}[ht]
\centering
\caption{Percentage of repetition fails to deliver testing result within 1 hour.}
\label{tab:fails}
\begin{tabular}{rrrrrrr}
  \hline
\multirow{2}{*}{SE}&\multicolumn{6}{c}{$\tau$}\\ \cline{2-7}
& 0 & 0.2 & 0.4 & 0.6 & 0.8 & 1 \\ 
  \hline
 1 & 17.0\% & 16.5\% & 8.0\% & 2.5\% & 3.0\% & 2.5\% \\ 
   2 & 10.5\% & 14.5\% & 14.0\% & 24.5\% & 33.0\% & 22.0\% \\ 
   3 & 24.5\% & 13.5\% & 1.5\% & 1.5\% & 0 & 1.5\% \\ 
   \hline
\end{tabular}
\end{table}

\subsection{Additional simulation outputs}
Figure~\ref{fig:methods_bar} displays the simulation results along with $90\%$ error bars to illustrate the variability of the outcomes across 200 repetitions.
\begin{figure}[!htpb]
\includegraphics[width=\textwidth]{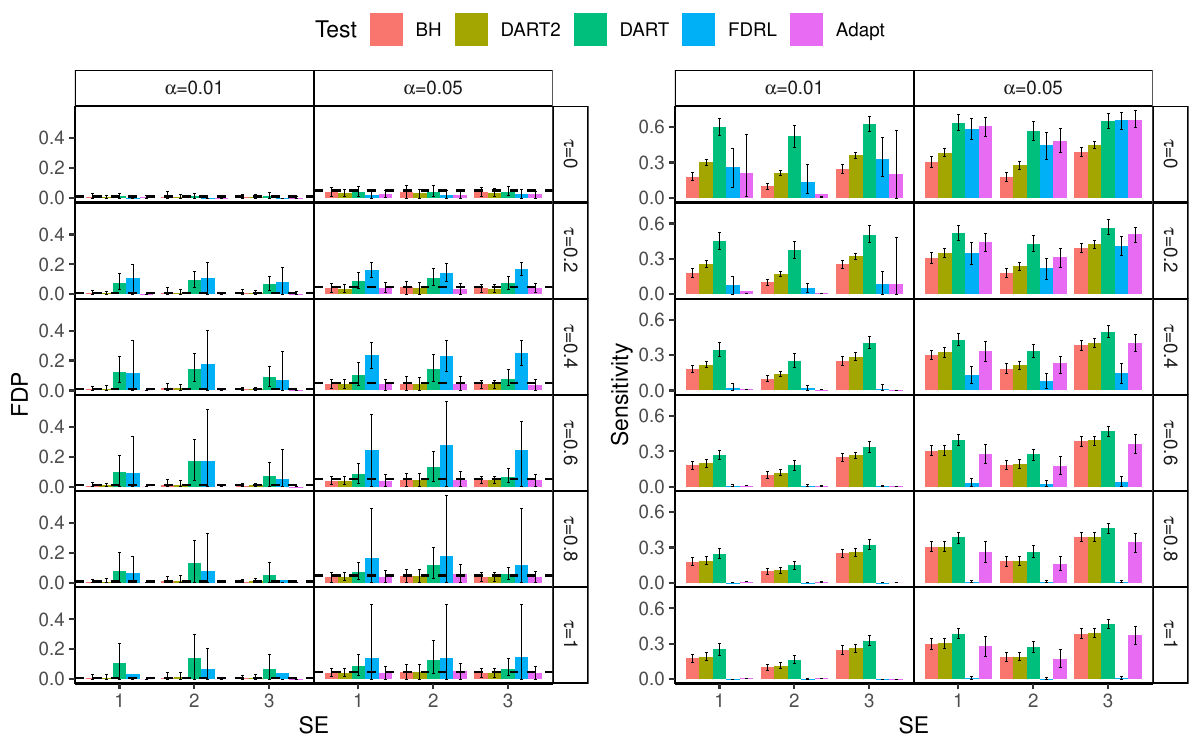}
    \caption{Performance comparison of the 7-layer DART2 with the competing method under different types of testing statistics, different desired FDR level $\alpha$ and different misleading level $\tau$. The primary bars indicate the average performance over $200$ repetitions.
    while the error bars represent the $90\%$ confidence intervals, denoting the $5\%$ and $95\%$ quantiles from $200$ simulations. 
    The left panel shows the average feature-level FDP, with dashed horizontal lines indicating the desired FDR $\alpha$. The right panel shows the average feature-level sensitivity.}
    \label{fig:methods_bar}
\end{figure}

\section{Proofs}\label{app:theory}

\subsection{Proofs of main theorems}\label{app:proof_thm}
Note that in the proof, we are proving a stronger version of the theorem 1 and 2, with $\alpha^\el=\alpha$, $\ell=1,\ldots,L$.\par
We introduce some notations before we provide the proofs. On layer $\ell$,
for a working node $S\in\mB^{(\ell)}$, let
$\mU(S)=\{S'\subset S:S'\in \cup_{\ell'=1}^{\ell-1}\mB^{(\ell')}\}$
be the collection of sets in the testing path of $S$.
In addition, let $\mU^c(S)=\{S''\in \cup_{\ell'=1}^{\ell-1}\mB^{(\ell')} :S''\cap S=\emptyset, S''\cup S\subset A,\text{ for some }A\in\mA^\el\}$ be the collection of sets that was planning to combined with $S$ on layer $\ell$ of the static aggregation tree but rejected on previous layers. When $S\in\mB^{(1)}$, we set $\mU(S)=\mU^c(S)=\emptyset$. 
We define $G_S(c)$ as the complementary CDF conditional on previous testing results. When $\ell=1$, we have $S=\{i\}\subset\{1,...,m\}$, and $G_S(c)=P(Z_i\geq c)$ with $Z_1,\ldots,Z_m\overset{iid}{\sim} N(0,1)$. When $\ell>1$, the oracle rejection path for set $S\in\mB^\el$ is recursively defined as \[\mQ_z^{(1:\ell-1)}=\{z:\forall S'\in\mU(S), G_{S'}(Z_{S'})\geq\hat t^{(\ell_{S'})}(\alpha),\forall S''\in\mU^c(S), G_{S''}(Z_{S''})\leq\hat t^{(\ell_{S''})}(\alpha)\},\] where
\[
G_S(c)=\P\big(Z_S\geq c\big|\mQ_z^{(1:\ell-1)}\big)
\]
and $Z_S=\sum_{i\in S} Z_i/\sqrt{|S|}$, and $\ell_{S'},\ell_{S''}\in\{1,...,\ell-1\}$ is the value s.t. $S'\in\mB^{(\ell_{S'})}$ and $S''\in\mB^{(\ell_{S''})}$, respectively. \par
Given $Z_1,\ldots,Z_m$ are mutually independent, we have
\[G_S(c)=\P\big(Z_S\geq c\big|\forall S'\in\mU(S), G_{S'}(Z_{S'})\geq\hat t^{(\ell_{S'})}(\alpha)\big)
\]
Given the definition of $G_S(c)$, we define the rejection path as
\begin{equation}
\label{eq::rejp}
\mQ^{(1:\ell-1)}=\{t:\forall S'\in\mU(S), G_{S'}(T_{S'})\geq\hat t^{(\ell_{S'})}(\alpha),\forall S''\in\mU^c(S), G_{S''}(T_{S''})\leq\hat t^{(\ell_{S''})}(\alpha)\}
\end{equation}
\par
In addition, for two sequence of real numbers $a_m$ and $b_m$, we write $a_m=o(b_m)$ when $a_m/b_m\to 0$, and $a_m=O(b_m)$ when $\lim_{m\to\infty}|a_m/b_m|\leq C$ for some constant $C$.
To prove the asymptotic properties of RDART, we need the following lemmas. Here, Lemma~\ref{lem::mcond} is introduced and proved in DART, Lemma 3 \citep{li2023dart}.

\mathleft
\begin{lem}\label{lem::mcond}
Let $\tilde\Omega_0=\{i: \tilde T_i \text{ follows } \Unif(0,1) \}$, $\mB_{0a}^{(\ell)}:=\{S\in\mB_0^\el:\exists A\in\mA^{(L)}\setminus \mA', s.t. S\subset A\}$,
and  $\mB_{0b}^{(\ell)}:=\{S\in\mB_{0a}^\el:S\in \tilde\Omega_0\}$,
we have:
\begin{flalign*}
\text{(1)}&\quad \max_{S\in\mB_{0a}^{(\ell)}}\sup_{c\in[0,\gamma_m]}\bigg|\frac{G_S(c)}{\bar\Phi(c)}-1\bigg|\to 0\\
\text{(2)}&
\quad \max_{S\in\mB_{0b}^{(\ell)}}\sup_{c\in[0,\bar\Phi^{-1}(1/m)]}\bigg|\frac{\P(T_S>c|\mQ^{(1:\ell-1)})}{\P(T_S>c)}-1\bigg|\to 0
\end{flalign*}
\end{lem}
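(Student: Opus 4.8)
The plan is to prove both displays by a single mechanism and to handle the recursive definition of $G_S$ by induction on the layer index $\ell$, using that the tree has fixed finite depth $L$, so every node obeys $|S|\leq M^{L-1}=O(1)$ and every testing path $\mU(S)$ contains only boundedly many sets. The base case $\ell=1$ is immediate since $G_{\{i\}}(c)=\bar\Phi(c)$ exactly for standard-Gaussian nulls. For the inductive step I would fix a null node $S\in\mB_{0a}^{(\ell)}$ and start from the representation already recorded in the excerpt,
\[
G_S(c)=\P\big(Z_S\geq c\,\big|\,B_S\big),\qquad B_S=\bigcap_{S'\in\mU(S)}\{G_{S'}(Z_{S'})\geq\hat t^{(\ell_{S'})}(\alpha)\},
\]
where the $\mU^c(S)$ constraints have already been dropped because those sets are disjoint from $S$ and hence independent of $(Z_S,\{Z_{S'}\}_{S'\subset S})$. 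Each $G_{S'}$ is a decreasing survival function, so $\{G_{S'}(Z_{S'})\geq\hat t\}=\{Z_{S'}\leq c_{S'}\}$ with $c_{S'}=G_{S'}^{-1}(\hat t^{(\ell_{S'})}(\alpha))$, and the induction hypothesis gives $c_{S'}=\bar\Phi^{-1}(\hat t^{(\ell_{S'})}(\alpha))(1+o(1))$.

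Bayes' rule turns the target into $G_S(c)/\bar\Phi(c)=\P(B_S\mid Z_S\geq c)/\P(B_S)$. For the upper bound I would invoke positive association: $\{Z_S\geq c\}$ is increasing while $B_S=\bigcap_{S'}\{Z_{S'}\leq c_{S'}\}$ is decreasing in the independent coordinates $\{Z_i:i\in S\}$, so the Harris (FKG) inequality gives $\P(B_S\mid Z_S\geq c)\leq\P(B_S)$ and hence $G_S(c)/\bar\Phi(c)\leq1$. Since the subsets are null and the thresholds are stringent, $\P(B_S)\to1$, so for the matching lower bound it suffices to show $\sup_{c\in[0,\gamma_m]}\P(B_S^c\mid Z_S\geq c)\to0$ uniformly in $S$, which I would bound by $\P(B_S^c\mid Z_S\geq c)\leq\sum_{S'\in\mU(S)}\P(Z_{S'}>c_{S'}\mid Z_S\geq c)$ over the $O(M^{L-1})$ sets in the path.

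Each summand is a conditional bivariate-Gaussian tail. With $\rho_{S'}=\Cor(Z_S,Z_{S'})=\sqrt{|S'|/|S|}$, the law of $Z_{S'}$ given $Z_S\geq c$ has mean $\rho_{S'}\E[Z_S\mid Z_S\geq c]=O(\rho_{S'}c)$ and variance $1-\rho_{S'}^2$, so a standard computation gives $\P(Z_{S'}>c_{S'}\mid Z_S\geq c)\lesssim\bar\Phi\big((c_{S'}-\rho_{S'}c)/\sqrt{1-\rho_{S'}^2}\big)$. The decisive point is that $S'$ is a strict subset of the bounded-size node $S$, so $\rho_{S'}\leq\sqrt{(|S|-1)/|S|}$ stays bounded away from $1$; since the sparsity hypothesis $m_1=O(m^{r_1})$ with $r_1<(M^{L-1}+1)^{-1}$ forces stringent admissible thresholds, $c_{S'}\asymp\sqrt{2\log m}$, and for $c\leq\gamma_m$ of the same order the margin $c_{S'}-\rho_{S'}c\asymp(1-\rho_{S'})\sqrt{2\log m}\to\infty$ while $\sqrt{1-\rho_{S'}^2}$ remains bounded below. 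Hence each tail, and therefore the bounded sum, tends to $0$ uniformly, closing the induction and establishing (1).

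For (2) I would run the identical Bayes-plus-association-plus-tail computation with the genuine statistics $T_S$ and the actual rejection path $\mQ^{(1:\ell-1)}$ in place of $Z_S$ and the oracle path. Restricting to $\mB_{0b}^{(\ell)}$, whose members lie in $\tilde\Omega_0$ and so carry exactly $\Unif(0,1)$ p-values, makes $T_i=\Phi^{-1}(1-P_i)$ exactly $\N(0,1)$; then $T_S$ is exactly standard Gaussian, $\P(T_S>c)=\bar\Phi(c)$ holds with no approximation, and the same exceedance bounds apply verbatim on the slightly larger range $[0,\bar\Phi^{-1}(1/m)]$. I expect the main obstacle to be the uniformity in the lower bound of (1): controlling $\P(B_S^c\mid Z_S\geq c)$ simultaneously over every null node $S$ and every $c$ in the range is exactly where one must quantify that the positive dependence created by coordinates shared between a node and its tested subsets — encoded in $\rho_{S'}$ being bounded away from $1$ — is too weak to push the subset statistics past their stringent thresholds.
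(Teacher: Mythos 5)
First, a point about the comparison itself: this paper does not actually prove Lemma~1. It is imported wholesale from DART (Lemma~3 of \citet{li2023dart}), and the appendix here only invokes it inside the induction that proves Theorem~\ref{thm:indfdr}. So your attempt can only be judged on its own merits. On those merits, your architecture is the right one and is essentially the standard route for results of this type: induct on the layer, drop the $\mU^c(S)$ constraints by independence, write $G_S(c)/\bar\Phi(c)=\P(B_S\mid Z_S\geq c)/\P(B_S)$, and control both factors by a union bound over the $O(M^{L-1})$ path sets plus a Gaussian tail estimate in which the nested correlation $\rho_{S'}=\sqrt{|S'|/|S|}\leq\sqrt{1-1/M^{L-1}}$ is bounded away from one. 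Your identification of the sparsity condition $r_1<(M^{L-1}+1)^{-1}$ as what makes the margin $c_{S'}-\rho_{S'}c$ diverge is exactly the right quantitative point. (Minor remark: the FKG step is dispensable, since $\P(B_S\mid Z_S\geq c)\leq 1$ and $\P(B_S)\to 1$ already give the upper bound $1+o(1)$; also, $[0,\bar\Phi^{-1}(1/m)]$ in part (2) is slightly \emph{smaller} than $[0,\gamma_m]$, not larger.)

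There are, however, two genuine gaps. The first is the treatment of the thresholds. You take $c_{S'}=G_{S'}^{-1}(\hat t^{(\ell_{S'})}(\alpha))$ to be a deterministic quantity of order $\sqrt{2\log m}$, asserting that sparsity ``forces stringent admissible thresholds.'' But $\hat t^{(\ell_{S'})}(\alpha)$ is a data-dependent random variable, and the only place a lower bound of the form $\P(\hat c^{(h)}\geq\beta_m)\to 1$ is established is inside the proof of Theorem~\ref{thm:indfdr} --- by an induction that invokes this very lemma at each layer. As written, your proof of the lemma presupposes a consequence of the theorem whose proof requires the lemma; this circularity must be broken explicitly, either by proving the lemma uniformly over all \emph{deterministic} thresholds in $[\beta_m,\gamma_m]$ and substituting afterwards, or by folding the statement ``$\hat c^{(h)}\geq\beta_m$ for all $h<\ell$'' into your inductive hypothesis and re-establishing it at each layer via the FDP analysis, i.e., interleaving your induction with the one in the theorem rather than running it standalone. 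The randomness of the thresholds also undercuts the FKG step as stated: $B_S$ is a decreasing event in $\{Z_i:i\in S\}$ only once the $c_{S'}$ are fixed (or measurable with respect to coordinates outside $S$).

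The second gap is the tail estimate, which is the entire analytic content of the lemma and is not correct as stated. Conditionally on $Z_S\geq c$, the law of $Z_{S'}$ is a Gaussian mixture: it does not have variance $1-\rho_{S'}^2$, and the bound $\P(Z_{S'}>c_{S'}\mid Z_S\geq c)\lesssim\bar\Phi\big((c_{S'}-\rho_{S'}c)/\sqrt{1-\rho_{S'}^2}\big)$ ignores the contribution of realizations with $Z_S$ far above $c$. This is precisely the dangerous regime here, because $c$ ranges up to $\gamma_m$, which exceeds the threshold scale, and once $Z_S>c_{S'}/\rho_{S'}$ the conditional exceedance probability is of order one. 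The repair is routine --- split the integral at $z^*=c+(\log\log m)/(c\vee 1)$, bound the near piece by $\bar\Phi\big((c_{S'}-\rho_{S'}z^*)/\sqrt{1-\rho_{S'}^2}\big)$ and the far piece by $\bar\Phi(z^*)/\bar\Phi(c)\leq(\log m)^{-1}(1+o(1))$, both uniformly in $c\in[0,\gamma_m]$ and in $S$ --- but without some such argument the ``standard computation'' is an assertion, not a proof.
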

\mathcenter

\begin{lem}
\label{lem::tail}
Let $ \quad \gamma_m=\sqrt{2\log m+\log\log\log m}$, then
\[\P(\hat c^{(\ell)}<\gamma_m)\to 1,\forall \ell\in\{1,\ldots,L\}\]
\end{lem}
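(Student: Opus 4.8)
The plan is to show that the empirical FDP criterion defining $\hat c^{(\ell)}$ is already met at a deterministic cutoff that sits strictly below $\gamma_m$, so the defining infimum cannot reach $\gamma_m$. Using the stronger $\alpha^\el=\alpha$ version announced in the proof preamble, I would write the threshold as
\[
\hat c^{(\ell)}=\inf\Big\{c\ge\alpha_m:\frac{\sum_{S\in\mB^\el}|S|\,\bar\Phi(c)}{\max\{\sum_{S\in\mB^\el}|S|\,I\{T_S>c\},\,1\}}\le\alpha\Big\},
\]
with the convention $\hat c^{(\ell)}=\gamma_m$ if the constraint fails throughout $[\alpha_m,\gamma_m]$. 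It then suffices to exhibit a single admissible $c_0<\gamma_m$ at which the displayed ratio is $\le\alpha$ for all large $m$; this forces $\hat c^{(\ell)}\le c_0<\gamma_m$.

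The key observation is that this reduction is essentially deterministic and does not invoke the distributional approximations of Lemma~\ref{lem::mcond}. Since the denominator is at least $1$, writing $N^{(\ell)}=\sum_{S\in\mB^\el}|S|$ gives
\[
\frac{\sum_{S\in\mB^\el}|S|\,\bar\Phi(c_0)}{\max\{\sum_{S\in\mB^\el}|S|\,I\{T_S>c_0\},\,1\}}\le N^{(\ell)}\,\bar\Phi(c_0).
\]
Because the nodes of $\mB^\el$ are disjoint subsets of the original $m$ hypotheses (after removing those rejected on earlier layers), we have $N^{(\ell)}\le m$ for every realization of the data. Choosing $c_0=\sqrt{2\log m}$ and applying the Mills-ratio tail bound $\bar\Phi(c_0)\sim (2m\sqrt{\pi\log m})^{-1}$, I get $N^{(\ell)}\bar\Phi(c_0)\le m\,\bar\Phi(c_0)\to 0$. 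Hence the ratio is below $\alpha$ once $m$ is large, and $c_0$ is admissible since $\alpha_m\le\sqrt{2\log m}<\gamma_m=\sqrt{2\log m+\log\log\log m}$ for large $m$.

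The bound never uses the layer index and $L$ is finite, so the conclusion is uniform over $\ell\in\{1,\dots,L\}$: for all sufficiently large $m$ the event $\{\hat c^{(\ell)}<\gamma_m\ \forall\ell\}$ holds deterministically, which yields $\P(\hat c^{(\ell)}<\gamma_m)\to 1$ a fortiori. The only point requiring care — and the nearest thing to an obstacle in an otherwise short argument — is the data dependence of the collection $\mB^\el$: one must note that $N^{(\ell)}\le m$ holds simultaneously across all possible screening outcomes, so no union bound or concentration over the random tree realization is needed, in sharp contrast to the uniform ratio control established in Lemma~\ref{lem::mcond}.
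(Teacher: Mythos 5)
Your proof is correct and follows essentially the same route as the paper's: bound the screening criterion's numerator by $m\,\bar\Phi(\cdot)=o(1)$ while the denominator is at least $1$, so the defining ratio falls below $\alpha$ deterministically for large $m$ and the infimum must lie below $\gamma_m$. The only (cosmetic) difference is that you evaluate the criterion at $c_0=\sqrt{2\log m}<\gamma_m$ and note the bound holds uniformly over all screening outcomes, whereas the paper evaluates at $\gamma_m$ itself conditionally on the rejection path $\mQ^{(1:\ell-1)}$; your variant actually delivers the strict inequality $\hat c^{(\ell)}<\gamma_m$ slightly more cleanly.
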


\begin{lem}
\label{lem::nullsig}
For any node $S\in\mB^\el$ with $S\subset \Omega_0$,
\begin{equation}
\label{eq::nullsig}
P(\exists i,j, \quad s.t. \quad T_i\geq \hat t^*_S, T_j\geq \hat t^*_S|T_S\geq \hat c^\el)\to 0
\end{equation}
\end{lem}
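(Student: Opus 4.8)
The plan is to bound the target conditional probability by the ratio
$P\big(\exists\, i\neq j\in S:\,T_i\ge \hat t^*_S,\ T_j\ge \hat t^*_S,\ T_S\ge \hat c^\el\big)\big/P\big(T_S\ge \hat c^\el\big)$
and to show the numerator is of strictly smaller order than the denominator. First I would remove the randomness of the threshold: by Lemma~\ref{lem::tail}, $\hat c^\el<\gamma_m$ with probability tending to one, and along the same lines one checks $\hat c^\el\to\infty$, so it suffices to control the ratio uniformly over deterministic values $c\in[\underline c_m,\gamma_m]$ with $\underline c_m\to\infty$ and then substitute $c=\hat c^\el$; the monotonicity of the criterion in \eqref{eq:tl} keeps this uniformization harmless. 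Because $S\subset\Omega_0$ means every $\theta_i\le 0$, each $T_i$ is stochastically dominated by an $N(0,1)$ variable, so up to this domination I may treat the null statistics as standard Gaussian and, via Lemma~\ref{lem::mcond}(1), replace the path-conditional tail $G_S(\cdot)$ by $\bar\Phi(\cdot)$ with asymptotically negligible error.

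For a fixed pair $(i,j)$, the event forces the Stouffer sum $T_S=\sum_{k\in S}T_k/\sqrt{|S|}$ to reach $c$ while the two designated summands each lie in their own upper tail above $\hat t^*_S=c/\sqrt{|S|}$. The heart of the argument is a two-sided Gaussian tail estimate showing this is a strictly costlier large-deviation configuration than $\{T_S\ge c\}$ alone: conditioning on $T_S\ge c$ already pins the per-coordinate ``budget'' at about $c/\sqrt{|S|}$, and demanding that a \emph{second} coordinate independently clear the same level contributes an extra factor that decays in $c$. Crucially, I would exploit the path-conditioning $\mQ^{(1:\ell-1)}$ recorded in the proof setup: the survival of $S$ to layer $\ell$ requires every sub-aggregate over $\mU(S)$ to have stayed below its earlier screening threshold, which actively suppresses configurations where two coordinates of a null node are simultaneously large (such a pair would have inflated the aggregate of their first common ancestor). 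Quantifying this suppression is what turns the per-pair bound into a genuine $o(1)$ contribution.

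Summing the per-pair estimates over the at most $\binom{|S|}{2}$ pairs, I would then invoke $|S|\le M^{L-1}$ together with the sparsity $m_1=O(m^{r_1})$, $r_1<(M^{L-1}+1)^{-1}$: the former caps the pair count, and the latter guarantees that the layer is dominated by null nodes, so that the denominator $P(T_S\ge \hat c^\el)$ stays of order $\bar\Phi(\hat c^\el)$ rather than degenerating. I expect the main obstacle to be precisely the sharpness and uniformity of the per-pair tail comparison: a crude union bound over pairs is too lossy once $|S|$ is allowed to grow, so the ``extra decaying factor'' coming from requiring two deep excursions — reinforced by the hierarchical survival constraint — must be shown to beat the pair count uniformly over all null nodes and over the admissible range of $\hat c^\el$. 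Transferring this estimate cleanly from the unconditional Gaussian law to the path-conditional law $G_S$ through Lemma~\ref{lem::mcond} is the remaining technical point.
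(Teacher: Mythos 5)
Your overall scaffolding --- writing the conditional probability as a ratio, de-randomizing $\hat c^\el$ over a deterministic range $[\beta_m,\gamma_m]$ via Lemma~\ref{lem::tail}, passing to standard Gaussians, and union-bounding over the at most $\binom{|S|}{2}$ pairs (harmless since $|S|\le M^{L-1}$) --- is the same skeleton as the paper's, which reduces the lemma to showing
\[
\sup_{a\in[\beta,\gamma]}\frac{P\bigl(Z_1\ge a/\sqrt{|S|},\,Z_2\ge a/\sqrt{|S|},\,Z_1+Z_2+\sqrt{|S|-2}\,Z_3\ge\sqrt{|S|}\,a\bigr)}{\bar\Phi(a)}\to 0 .
\]
The genuine gap is the step you yourself call the heart of the argument. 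The claim that, on top of $\{T_S\ge c\}$, ``demanding that a second coordinate independently clear the same level contributes an extra factor that decays in $c$'' is not merely unproven --- it is false whenever $|S|\ge 3$. Conditionally on $T_S=c$, the pair $(T_i,T_j)$ is bivariate normal with both means equal to $c/\sqrt{|S|}$, variances $1-1/|S|$, and correlation $-1/(|S|-1)$, so
\[
P\bigl(T_i\ge c/\sqrt{|S|},\,T_j\ge c/\sqrt{|S|}\bigm| T_S=c\bigr)=\frac14-\frac{1}{2\pi}\arcsin\Bigl(\frac{1}{|S|-1}\Bigr),
\]
a quantity constant in $c$ and strictly positive for $|S|\ge3$ (it equals $1/6$ at $|S|=3$ and increases toward $1/4$); conditioning on $\{T_S\ge c\}$ instead of $\{T_S=c\}$ perturbs this only by $O(1/c)$. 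In large-deviation terms, the cheapest configuration realizing $\{T_S\ge c\}$ already places every coordinate at $c/\sqrt{|S|}$, so asking two designated coordinates to sit at or above that level costs nothing extra; only for $|S|=2$ does the pair constraint pin $(T_i,T_j)$ to a corner of the constraint set and produce the decaying factor you describe. Your fallback --- that the path conditioning $\mQ^{(1:\ell-1)}$ suppresses such configurations --- also fails quantitatively: every earlier threshold satisfies $\hat c^{(h)}\ge\beta_m$, the same order as $\hat c^\el$, whereas two coordinates at the refining level $\hat c^\el/\sqrt{|S|}$ raise the Stouffer statistic of a size-two sub-node only to about $\sqrt{2/|S|}\,\hat c^\el$, which stays below $\hat c^{(h)}$; survival of $S$ to layer $\ell$ therefore does not exclude these configurations.

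For comparison, the paper handles the Gaussian estimate by a different device: it splits on whether $Z_3\ge\sqrt{(|S|-2)/|S|}\,a-\log\log m$, disposes of the complementary region by a derivative-ratio (L'Hospital-type) argument over the endpoints and stationary points of $H(a)$, and \emph{asserts} that the main region contributes $o(\bar\Phi(a))$. That asserted region is exactly where your ``extra decaying factor'' would have to come from: the optimal configuration for the joint event has $Z_3$ at $\sqrt{(|S|-2)/|S|}\,a$, inside that region, and the orthant computation above shows its contribution is of order $\bar\Phi(a)$, not $o(\bar\Phi(a))$, once $|S|\ge3$. So your proposal is incomplete at precisely its central step, the hole cannot be closed by sharpening the per-pair tail comparison, and the same computation indicates that any complete proof of the lemma for nodes with $|S|\ge3$ would require an ingredient that is absent from both your outline and the argument sketched in the paper; the approach is sound only in the case $|S|=2$.
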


\begin{proof}[\textbf{Proof of Theorem~\ref{thm:indfdr}}] 
Since $FDP$ is a random variable bounded by $1$, to prove the $FDR$ control, it is suffice to show:
$$\lim_{m\to\infty}\P(FDP\leq \alpha+\epsilon)=1.$$
Let $R^\el=\{j\in \Omega: \exists S \text{ s.t. }  S\in\mB^{(\ell)}\cap R_\node \text{ and } j\in S, T_j\geq \hat t_S\}$ be the set of rejected hypotheses on layer $\ell$, 
$V^{(\ell)}=\{i\in R^\el: \exists S \text{ s.t. }  S\in\mB_0^{(\ell)} \text{ and }i\in S\}$ and
$W^{(\ell)}=\{i\in R^\el: \exists S \text{ s.t. }  S\in\mB_1^{(\ell)} \text{ and }i\in S\}$ be the rejected hypothesis who were originally selected from the node-level null node and alternative node on
layer $\ell$, respectively. On layer $\ell$, since $m_1=O(m^{r_1})$, we have
\begin{align}
\P(\forall j
\in W^\el, j\in \Omega_1)\geq & 1-m^{r_1} \P( T_j\geq \beta/\sqrt{M^{L-1}},j\in\Omega_0)\nonumber\\
\geq & 1- o(m^{r_1+(r_1-1)/M^{L-1}})\nonumber\\
\to & 1 \label{eq:key}
\end{align}
Thus, if
\begin{align}
\label{eq:layw}
\lim_{m\to\infty}\P(FDP^{(\ell)}\leq \alpha+\epsilon)=1,
\end{align}
then together with Lemma~\ref{lem::nullsig}, we have 
\begin{align*}
\P\bigg(\frac{|R^\el\cap \Omega_0|}{|R^\el|}\leq \alpha+\epsilon\bigg)&\geq\P\bigg(\frac{|W^\el\cap\Omega_0|+|V^\el|}{|R^\el|}\leq \alpha+\epsilon\bigg)\\
&\geq P(FDP^\el\leq \alpha+\epsilon)\to 1
\end{align*}
and,
\begin{align*}
\lim_{m\to\infty}\P(FDP\leq \alpha+\epsilon) \leq & \lim_{m\to\infty}\P\bigg(\max_{\ell}\frac{|R^\el\cap \Omega_0|}{|R^\el|}\leq \alpha+\epsilon\bigg)\\
\leq & \lim_{m\to\infty}\P\bigg(\frac{|R^\el\cap \Omega_0|}{|R^\el|}\leq \alpha+\epsilon,\forall \ell\in\{1,\ldots,L\}\bigg)\\
\to & 1
\end{align*}
Thus, it is suffice to prove \eqref{eq:layw} for any layer $
\ell=1,\ldots,L$.
\par

The random variable $FDP^{(\ell)}$ can be decomposed to the product into two parts.
\begin{align}
\label{FDPparts}
FDP^{(\ell)}=&\frac{\sumnull\csil I\{T_S>\hat c^{(\ell)}\}}{\sumnull\csil G(\hat c^{(\ell)})}\times \frac{\sumnull\csil G(\hat c^{(\ell)})}{\max(\sum_{S\in \mB^{(\ell)}}|S|I\{T_S>\hat c^{(\ell)}\},1)}
\end{align}
Based on \eqref{FDPparts}, in order to prove
$\lim_{m\to\infty}\P(FDP^{(\ell)}\leq \alpha+\epsilon)=1$
for all $\epsilon>0$, we only need prove
\begin{align}
&\lim_{m\to\infty}\P\Bigg\{\frac{\sumnull\csil I\{T_S>\hat c^{(\ell)}\}}{\sumnull\csil G(\hat c^{(\ell)})}-1<\epsilon\Bigg\}\to 1\label{part1}\\
&\lim_{m\to\infty}\P\Bigg\{\bigg|\frac{\sumnull\csil G(\hat c^{(\ell)})}{\max(\sum_{S\in \mB^{(\ell)}}|S|I\{T_S>\hat c^{(\ell)}\},1)}-\alpha\bigg|>\epsilon\Bigg\}\to 0\label{part2}
\end{align}
\eqref{part2} is immediately followed by the continuity of $G(\cdot)$ and the monotonicity of the indicator function. We will prove \eqref{part1} by induction. 
\par
\textbf{Layer 1:}\par 
Let $\nu_m=1/\log m$ and $J=\J$. Define a sequence $0=c_0<...<c_{J}=\gamma_m$ satisfies $c_k-c_{k-1}=\nu_m$ for $1\leq k<J$ and $c_{J}-c_{J-1}\leq \nu_m$. By Markov inequality, we have \begin{equation}
\label{eq::int}
\int_0^{c'}P\bigg\{\bigg|\frac{\sum_{S\in\mB_{0}^{(1)}}I(T_S>c)-P(T_S>c)(1-\delta_{0m})}{\sum_{S\in\mB_{0}^{(1)}}\bar\Phi(c)}\bigg|\geq \epsilon\bigg\}dc=o(\nu_m)
\end{equation}
Accordingly, $\forall \epsilon>0$,
\begin{align}
&P\bigg(\max_{0\leq k\leq J}\bigg|\frac{\sum_{i\in\mB_{0}^{(1)}}I(T_i >c_k)-\sum_{i\in\mB_{0}^{(1)}}P(T_i>c_k)(1-\delta_{0m})
}{\sum_{i\in\mB_{0}^{(1)}} G(c_k)}\bigg|>\epsilon
\bigg)
\to 0\label{L1.1.2}
\end{align}
 Together with the fact that $\sup_{j=1,...,J}\bigg|G(k)/G(k-1)-1\bigg|=o(1)$, we have 
\begin{align}
\label{eq:px1}
P\bigg(\sup_{c\in [0,\gamma_m]}\bigg|\frac{\sum_{i\in\mB_0^{(1)}}I(X_i>  c)-m_0 G(c)
}{m_0 G(c)}\bigg|>\epsilon\bigg)=o(1)
\end{align}
Thus,
\begin{equation}
\label{eq:fdr1}
\P(FDP^{(1)}<\alpha+\epsilon)\to 1
\end{equation}

Let \[\mX^{(1)}=\bigg\{x:\bigg|\frac{\sum_{i\in\mB_0^{(1)}}I(T_i>  \hat c^{(1)})-m_0 G(\hat c^{(1)})
}{m_0 G(\hat c^{(1)})}\bigg|\leq\epsilon\bigg\}\]


On $\mX^{(1)}$, 
\begin{equation*}
\sumnull I(T_S>\hat c^{(1)})\leq 
m_0 G(\hat c^{(1)})+m_0 G(\hat c^{(1)})\epsilon
\end{equation*}
Combined with
\begin{align*}
m_0 G(\hat c^{(1)})&\leq \alpha\sum_{S\in \mB^{(\ell)}} \bI\{T_S>\hat c^{(1)}\}\leq m_0\alpha\bI\{T_S>\hat c^{(1)}\}+\alpha Cm^{r_1}
\end{align*}
we have,
\begin{equation*}
(1-\alpha-\alpha\epsilon)m_0G(\hat c^{(1)})\leq \alpha Cm^{r_1}
\end{equation*}
and accordingly $\hat c^{(1)}\geq \beta_m$ when $m$ large enough. Thus, together with \eqref{eq:px1}, we have
\[\P(\hat c^{(1)}\geq \beta_m)\to 1\]

\textbf{Layer $\ell\geq 2$:}\par

Suppose on layer $h=1,\ldots,\ell-1$, $\P(\hat c^{(h)}\geq\beta_m)\to 1$ and $FDP^{(h)}\leq\alpha/M^{h-1}$ with probability converging to 1, then together with lemma~\ref{lem::mcond}, similar to the proof in Layer 1, we have
\begin{align*}
P\bigg(\sup_{c\in[0,\gamma_m]}\frac{\sum_{S\in\mB_0^{(\ell)}}|S|I(T_S>c)-\sum_{S\in\mB_0^{(\ell)}}|S|G(c)
}{\sum_{S\in\mB_0^{(\ell)}}|S|G(c)}>\epsilon\Bigg|\mQ^{(1:\ell-1)}\bigg)=o(1)
\end{align*}
Define $\mX^{(\ell)}=\bigg\{x:\bigg|\frac{\sum_{S\in\mB_0^{(\ell)}}|S|I(T_S>  \hat c^{(\ell)})-\sum_{S\in\mB_0^{(\ell)}}|S|G(\hat c^{(\ell)})
}{\sum_{S\in\mB_0^{(\ell)}}|S|G(\hat c^{(\ell)})}\bigg|\leq\epsilon\bigg\}$, then
\[\P(\mX^{(\ell)})\to 1.\]
Thus, 
\[\P(FDP^{(\ell)}<\alpha+\epsilon)\to 1\]

In addition, on $\cap_{t=1}^\ell \mX^{(t)}$, given $\P(FDP^{(h)}<\alpha+\epsilon)\to 1$ for all $h=1,\ldots,\ell$, we have
\[\P(\hat c^{(\ell)}\geq \beta_m)\to 1\]

\end{proof}

\begin{proof}[\textbf{Proof of Theorem~\ref{thm:robfdr}}]
Let $\hat t_S$ represent the robust refining threshold and $\hat t^*_S$ represent the naive refining threshold. Based on definition~\ref{def:rrt} for robust refining threshold, we have
\[\hat t_S\geq \hat t^*_S\]
Then, by redefining the $V^\el$, $W^\el$ and $R^\el$ based on the robust refine threshold, and considering that the robust refining threshold guarantees that at least one hypothesis will be rejected in every screened-out node, we can still get.
\[\P(\forall j\in W^\el,j\in\Omega_1)\to 1\]

Then following the same proving process in Theorem~\ref{thm:indfdr}, we can prove the asymptotic FDR control under the robust refining threshold.
\end{proof}

\subsection{Proofs of Lemmas}\label{app:proof_lem}

\begin{proof}[\textbf{Proof of Lemma~\ref{lem::tail}}]
\begin{align*}
\P(\hat c^\el<\gamma_m)&\geq \P\bigg(\frac{m^\el o(m^{-1})}{\max\{\sum_{S\in \mB^l}I(T_S>\gamma_m),1\}}\leq \alpha\bigg|\mQ^{(1:\el-1)}\bigg)\\
&\geq \P(o(m^{-1})m\leq \alpha)\\
&\to 1
\end{align*}

\end{proof}

\begin{proof}[\textbf{Proof of Lemma~\ref{lem::nullsig}}]
It is equivalent to prove 
\[\sup_{ a\in[\beta,\gamma]}\frac{P\big(Z_1\geq a/\sqrt{|S|},Z_2\geq a/\sqrt{|S|},Z_1+Z_2+\sqrt{|S|-2}Z_3\geq\sqrt{|S|}a\big)}{\bar\Phi(a)}\to 0\]

Since for $A_S=\big[\sqrt{(|S|-2)/|S|}a-\log\log m,+\infty\big]$,
\[
\sup_{a\in[\beta,\gamma]}\frac{P\big(Z_1\geq a/\sqrt{|S|},Z_2\geq a/\sqrt{|S|},Z_1+Z_2+\sqrt{|S|-2}Z_3\geq\sqrt{|S|}a,Z_3\in A_S\big)}{\bar\Phi(a)}\to 0
\]
It is sufficient to show
\[\sup_{ a\in[\beta,\gamma]}H(a)\to 0\]
where
\[
H(a)=\frac{P\big(Z_1\geq a/\sqrt{|S|},Z_2\geq a/\sqrt{|S|},Z_1+Z_2+\sqrt{|S|-2}Z_3\geq\sqrt{|S|}a,Z_3\in A_S^c\big)}{\bar\Phi(a)}
\]
\par
Together with L'Hospital rule and some tedious calculation, we have 

\begin{equation}
\label{eq:zzz}
\lim_{m\to\infty}\frac{\frac{d}{da}P\big(Z_1\geq a/\sqrt{|S|},Z_2\geq a/\sqrt{|S|},Z_1+Z_2+\sqrt{|S|-2}Z_3\geq\sqrt{|S|}a,Z_3\in A_S^c\big)}{\frac{d}{da}\bar\Phi(a)}=0
\end{equation}
Hence, whenever $a=\beta$ or $\gamma$,
\begin{align}
\lim_{m\to \infty}H(a)=&\lim_{m\to\infty}\frac{\frac{d}{da}P\big(Z_1\geq a/\sqrt{|S|},Z_2\geq a/\sqrt{|S|},Z_1+Z_2+\sqrt{|S|-2}Z_3\geq\sqrt{|S|}a,Z_3\in A_S^c\big)}{\frac{d}{da}\bar\Phi(a)}\nonumber\\
=&0\label{eq:bd}
\end{align}
Define $\mH_m:=\{a\in[\beta,\gamma]:\frac{d}{d a}H(a)=0\}$, we have
\begin{align*}
H(a)=\frac{\frac{d}{da}P\big(Z_1\geq a/\sqrt{|S|},Z_2\geq a/\sqrt{|S|},Z_1+Z_2+\sqrt{|S|-2}Z_3\geq\sqrt{|S|}a,Z_3\in A_S^c\big)}{\frac{d}{da}\bar\Phi(a)}
\end{align*}
and hence 
\begin{equation}
\label{eq:sup}
\lim \sup_{a\in\mH_m}H(a)\to 0
\end{equation}
So together with \eqref{eq:bd} and \eqref{eq:sup}, we have
\begin{equation*}
\sup_{ a\in[\beta,\gamma]}H(a)=\max\bigg\{H(\beta),H(\gamma),\sup_{a\in \mH_m}H(a)\bigg\}\to 0
\end{equation*}

\end{proof}

\bibliographystyle{elsarticle-harv}
\bibliography{manu.bib}

\begin{thebibliography}{15}
\expandafter\ifx\csname natexlab\endcsname\relax\def\natexlab#1{#1}\fi
\providecommand{\url}[1]{\texttt{#1}}
\providecommand{\href}[2]{#2}
\providecommand{\path}[1]{#1}
\providecommand{\DOIprefix}{doi:}
\providecommand{\ArXivprefix}{arXiv:}
\providecommand{\URLprefix}{URL: }
\providecommand{\Pubmedprefix}{pmid:}
\providecommand{\doi}[1]{\href{http://dx.doi.org/#1}{\path{#1}}}
\providecommand{\Pubmed}[1]{\href{pmid:#1}{\path{#1}}}
\providecommand{\bibinfo}[2]{#2}
\ifx\xfnm\relax \def\xfnm[#1]{\unskip,\space#1}\fi
\bibitem[{Barber and Ramdas(2017)}]{barber2017p}
\bibinfo{author}{Barber, R.F.}, \bibinfo{author}{Ramdas, A.},
  \bibinfo{year}{2017}.
\newblock \bibinfo{title}{The p-filter: multilayer false discovery rate control
  for grouped hypotheses}.
\newblock \bibinfo{journal}{Journal of the Royal Statistical Society Series B:
  Statistical Methodology} \bibinfo{volume}{79}, \bibinfo{pages}{1247--1268}.
\bibitem[{Benjamini and Hochberg(1995)}]{benjamini1995controlling}
\bibinfo{author}{Benjamini, Y.}, \bibinfo{author}{Hochberg, Y.},
  \bibinfo{year}{1995}.
\newblock \bibinfo{title}{Controlling the false discovery rate: a practical and
  powerful approach to multiple testing}.
\newblock \bibinfo{journal}{Journal of the Royal statistical society: series B
  (Methodological)} \bibinfo{volume}{57}, \bibinfo{pages}{289--300}.
\bibitem[{Cai et~al.(2022)Cai, Sun and Xia}]{cai2022laws}
\bibinfo{author}{Cai, T.T.}, \bibinfo{author}{Sun, W.}, \bibinfo{author}{Xia,
  Y.}, \bibinfo{year}{2022}.
\newblock \bibinfo{title}{Laws: A locally adaptive weighting and screening
  approach to spatial multiple testing}.
\newblock \bibinfo{journal}{Journal of the American Statistical Association}
  \bibinfo{volume}{117}, \bibinfo{pages}{1370--1383}.
\bibitem[{Davis and Meltzer(2007)}]{davis2007geoquery}
\bibinfo{author}{Davis, S.}, \bibinfo{author}{Meltzer, P.S.},
  \bibinfo{year}{2007}.
\newblock \bibinfo{title}{Geoquery: a bridge between the gene expression
  omnibus (geo) and bioconductor}.
\newblock \bibinfo{journal}{Bioinformatics} \bibinfo{volume}{23},
  \bibinfo{pages}{1846--1847}.
\bibitem[{Hu et~al.(2010)Hu, Zhao and Zhou}]{hu2010false}
\bibinfo{author}{Hu, J.X.}, \bibinfo{author}{Zhao, H.}, \bibinfo{author}{Zhou,
  H.H.}, \bibinfo{year}{2010}.
\newblock \bibinfo{title}{False discovery rate control with groups}.
\newblock \bibinfo{journal}{Journal of the American Statistical Association}
  \bibinfo{volume}{105}, \bibinfo{pages}{1215--1227}.
\bibitem[{Lei and Fithian(2018)}]{lei2018adapt}
\bibinfo{author}{Lei, L.}, \bibinfo{author}{Fithian, W.}, \bibinfo{year}{2018}.
\newblock \bibinfo{title}{Adapt: an interactive procedure for multiple testing
  with side information}.
\newblock \bibinfo{journal}{Journal of the Royal Statistical Society: Series B}
  \bibinfo{volume}{80}, \bibinfo{pages}{649--679}.
\bibitem[{Leung and Sun(2022)}]{leung2022zap}
\bibinfo{author}{Leung, D.}, \bibinfo{author}{Sun, W.}, \bibinfo{year}{2022}.
\newblock \bibinfo{title}{Zap: z-value adaptive procedures for false discovery
  rate control with side information}.
\newblock \bibinfo{journal}{Journal of the Royal Statistical Society Series B:
  Statistical Methodology} \bibinfo{volume}{84}, \bibinfo{pages}{1886--1946}.
\bibitem[{Li et~al.(2023)Li, Sung and Xie}]{li2023dart}
\bibinfo{author}{Li, X.}, \bibinfo{author}{Sung, A.D.}, \bibinfo{author}{Xie,
  J.}, \bibinfo{year}{2023}.
\newblock \bibinfo{title}{Dart: Distance assisted recursive testing}.
\newblock \bibinfo{journal}{Journal of Machine Learning Research}
  \bibinfo{volume}{24}, \bibinfo{pages}{1--41}.
\bibitem[{Liu et~al.(2013)}]{liu2013gaussian}
\bibinfo{author}{Liu, W.}, et~al., \bibinfo{year}{2013}.
\newblock \bibinfo{title}{Gaussian graphical model estimation with false
  discovery rate control}.
\newblock \bibinfo{journal}{The Annals of Statistics} \bibinfo{volume}{41},
  \bibinfo{pages}{2948--2978}.
\bibitem[{Qiu et~al.(2021)Qiu, Murrugarra-Llerena, Silva, Lin and
  Chinchilli}]{qiu2021neurt}
\bibinfo{author}{Qiu, L.}, \bibinfo{author}{Murrugarra-Llerena, N.},
  \bibinfo{author}{Silva, V.}, \bibinfo{author}{Lin, L.},
  \bibinfo{author}{Chinchilli, V.M.}, \bibinfo{year}{2021}.
\newblock \bibinfo{title}{Neurt-fdr: Controlling fdr by incorporating feature
  hierarchy}.
\newblock \bibinfo{journal}{arXiv preprint arXiv:2101.09809} .
\bibitem[{Stouffer et~al.(1949)Stouffer, Suchman, DeVinney, Star and
  Williams~Jr}]{stouffer1949american}
\bibinfo{author}{Stouffer, S.A.}, \bibinfo{author}{Suchman, E.A.},
  \bibinfo{author}{DeVinney, L.C.}, \bibinfo{author}{Star, S.A.},
  \bibinfo{author}{Williams~Jr, R.M.}, \bibinfo{year}{1949}.
\newblock \bibinfo{title}{The american soldier: Adjustment during army
  life.(studies in social psychology in world war ii), vol. 1} .
\bibitem[{Xie and Li(2018)}]{xie2018false}
\bibinfo{author}{Xie, J.}, \bibinfo{author}{Li, R.}, \bibinfo{year}{2018}.
\newblock \bibinfo{title}{False discovery rate control for high dimensional
  networks of quantile associations conditioning on covariates}.
\newblock \bibinfo{journal}{J R Stat Soc Series B Stat Methodol}
  \bibinfo{volume}{80}, \bibinfo{pages}{1015--1034}.
\newblock \DOIprefix\doi{10.1111/rssb.12288}.
\bibitem[{Yang et~al.(2024)Yang, Wang and Chen}]{yang20242dgbh}
\bibinfo{author}{Yang, L.}, \bibinfo{author}{Wang, P.}, \bibinfo{author}{Chen,
  J.}, \bibinfo{year}{2024}.
\newblock \bibinfo{title}{2dgbh: Two-dimensional group benjamini--hochberg
  procedure for false discovery rate control in two-way multiple testing of
  genomic data}.
\newblock \bibinfo{journal}{Bioinformatics} \bibinfo{volume}{40},
  \bibinfo{pages}{btae035}.
\bibitem[{Yun et~al.(2022)Yun, Zhang and Li}]{yun2022detection}
\bibinfo{author}{Yun, S.}, \bibinfo{author}{Zhang, X.}, \bibinfo{author}{Li,
  B.}, \bibinfo{year}{2022}.
\newblock \bibinfo{title}{Detection of local differences in spatial
  characteristics between two spatiotemporal random fields}.
\newblock \bibinfo{journal}{Journal of the American Statistical Association}
  \bibinfo{volume}{117}, \bibinfo{pages}{291--306}.
\bibitem[{Zhang et~al.(2011)Zhang, Fan and Yu}]{zhang2011multiple}
\bibinfo{author}{Zhang, C.}, \bibinfo{author}{Fan, J.}, \bibinfo{author}{Yu,
  T.}, \bibinfo{year}{2011}.
\newblock \bibinfo{title}{Multiple testing via {FDR$_L$} for large scale
  imaging data}.
\newblock \bibinfo{journal}{Annals of Statistics} \bibinfo{volume}{39},
  \bibinfo{pages}{613}.

\end{thebibliography}



\end{document}